\tikzset{res/.style={ellipse,draw,minimum height=0.5cm,minimum width=0.8cm}}
\newcommand{\Fam}{\mathrm{Fam}}
\newcommand{\ind}{\upmodels}
\newcommand{\probind}{\perp}
\newcommand{\cA}{\mathcal{A}}
\newtheorem{theorem}{Theorem}[section]
\newtheorem{result}[theorem]{Main result}
\newtheorem{lemma}[theorem]{Lemma}
\newtheorem{proposition}[theorem]{Proposition}
\theoremstyle{definition}
\newtheorem{definition}[theorem]{Definition}
\newtheorem{example}[theorem]{Example}
\title{Temporal Properties of Conditional Independence in Dynamic Bayesian Networks}
\author{
Rajab Aghamov\textsuperscript{\rm 1}\thanks{Authors listed in alphabetical order},
Christel Baier\textsuperscript{\rm 1},
Jo\"el Ouaknine\textsuperscript{\rm 2},\\
Jakob Piribauer\textsuperscript{\rm 1},
Mihir Vahanwala\textsuperscript{\rm 2},
Isa Vialard\textsuperscript{\rm 2}
}
\begin{document}

\maketitle

\begin{abstract}
Dynamic Bayesian networks (DBNs) are compact graphical representations used to model probabilistic systems where interdependent random variables and their distributions evolve over time. In this paper, we study the verification of the evolution of conditional-independence (CI) propositions against temporal logic specifications. To this end, we consider two specification formalisms over CI propositions: linear temporal logic (LTL), and non-deterministic Büchi automata (NBAs). This problem has two variants. Stochastic CI properties take the given concrete probability distributions into account, while structural CI properties are viewed purely in terms of the graphical structure of the DBN. We show that deciding if a stochastic CI proposition eventually holds is at least as hard as the Skolem problem for linear recurrence sequences,  a long-standing open problem in number theory. On the other hand, we show that verifying the evolution of structural CI propositions against LTL and NBA specifications is in PSPACE, and is  NP- and coNP-hard. We also identify natural restrictions on the graphical structure of  DBNs that make the verification of structural CI properties tractable.
\end{abstract}

\begin{links}
\link{Extended version}{this}
\end{links}

\section{Introduction}

Bayesian networks (BNs)  \cite{pearl1985bayesian,pearlbook,neapolitan1989probabilistic} are prominent tools in both data science and artificial intelligence that enable modeling and reasoning under uncertainty. BNs succinctly represent a full joint probability distribution
by using a directed acyclic graph (DAG) as a template to capture dependencies between variables and prescribe the probability distribution of each variable conditioned on its parents.
BNs have successfully been applied in medical AI \cite{lucas2004probabilistic}, natural language processing \cite{manning1999foundations}, robotics \cite{thrun2005probabilistic}, bioinformatics \cite{friedman2000using}, and risk assessment \cite{fenton2012risk}.

Dynamic Bayesian Networks (DBNs) extend BNs to describe systems where the outcomes modeled by random variables evolve with time \cite{murphyphd,koller2009probabilistic}. 
DBNs succinctly represent a \emph{sequence} of full joint probability distributions of a set of random variables, i.e., a DBN prescribes an initial joint probability distribution for the variables $\mathbf{V}^0$, and also prescribes the joint distribution of $\mathbf{V}^{t+1}$, the variables at time step (or time slice) $t+1$, conditioned on the variables $\mathbf{V}^t$ at time step $t$.
These are respectively given by an \emph{initial} BN and a \emph{step} BN, and their corresponding DAGs are collectively referred to as the \emph{DBN template}. To make a concrete DBN, the template is instantiated with \emph{conditional probability distributions} (CPDs).

The temporal dimension of DBNs has motivated applications in robotics \cite{thrun2005probabilistic} and systems biology \cite{dbnmodelcheck}.
Today, DBNs find applications in various areas, and the following examples illustrate the continued relevance of DBNs in modern AI.
Integrating DBNs can make computer-vision algorithms more adaptive and efficient \cite{piedade2023}, and furthermore, DBNs and LLMs have been combined to build multimodal AI systems that interact with users in a context-aware manner \cite{han2025}. In healthcare, DBNs support early sepsis prediction in the ICU while remaining interpretable and robust to missing data \cite{agard2025}. Recent neuroscience work uses multi-timescale DBNs to infer directed, behavior-dependent interactions between brain regions, demonstrating utility on high-quality datasets \cite{das2024}. Beyond medicine, DBNs are applied, e.g., to solar power generation forecasting \cite{zhang2024} and resilience analysis of dynamic engineering systems such as transportation networks \cite{kammouh2020}.
Given their relevance, algorithms for learning DBN structures from data are an active area of research (see, e.g.,  \citet{meng2024dbn}).

 \begin{figure*}[htbp]
    \centering
    \begin{subfigure}[b]{0.38\textwidth}
        \centering
        
         \resizebox{.8\textwidth}{!}{
        \begin{tikzpicture}[
    % Define styles for nodes
    latent/.style={circle, draw=black, fill=white, inner sep=1pt, minimum size=2em},
    % Positioning setup
    node distance=.8cm and .8cm, % Vertical and horizontal distance (adjusted for horizontal layout)
    >=stealth' % Arrow style
]

    % --- Nodes for time slice t ---
    \node[latent] (W_t) {$L$};
    \node[latent, right=of W_t] (RT_t) {$O$}; % Changed to right
    \node[latent, right=of RT_t] (Th_t) {$S$}; % Changed to right
    \node[latent, right=of Th_t] (H_t) {$H$};    % Changed to right

    \node[above=of W_t,xshift=-2cm,yshift=-0.5cm] (anchor1) {};
    \node[above=of H_t,xshift=1cm,,yshift=-0.5cm] (anchor2) {};
    
    \draw [dashed] (anchor1) -- (anchor2);

     % --- Nodes for time slice 0 ---
    \node[latent, above=of W_t] (W_0) {$L^0$};
    \node[latent, right=of W_0] (RT_0) {$O^0$}; % Changed to right
    \node[latent, right=of RT_0] (Th_0) {$S^0$}; % Changed to right
    \node[latent, right=of Th_0] (H_0) {$H^0$};    % Changed to right

    % --- Nodes for time slice t+1 ---
    % Position the first node of t+1 relative to the first node of t
    \node[latent, below=of W_t] (W_t1) {$L'$}; % Adjusted position for next slice
    \node[latent, right=of W_t1] (RT_t1) {$O'$};
    \node[latent, right=of RT_t1] (Th_t1) {$S'$};
    \node[latent, right=of Th_t1] (H_t1) {$H'$};

  % --- Edges within time slice 0 ---
    \edge {W_0} {RT_0};
    \edge {H_0} {Th_0};

    % --- Temporal Edges (from t to t+1) ---
    \edge {W_t} {W_t1};
    \edge {RT_t} {W_t1};
        \edge {W_t1} {RT_t1};
          \edge {H_t} {H_t1};
           \edge {Th_t} {H_t1};
      \edge {RT_t} {H_t1};
      \edge {H_t1} {Th_t1};

    % --- Plates for time slices ---
    % Adjusted fit coordinates to encompass horizontal nodes
    \node[draw, rectangle, rounded corners, fit=(W_t) (RT_t) (Th_t) (H_t), label={[anchor= west,xshift=-30] west:$t$}] (plate_t) {};
    \node[draw, rectangle, rounded corners, fit=(W_t1) (RT_t1) (Th_t1) (H_t1), label={[anchor= west,xshift=-30] west:$t+1$}] (plate_t1) {};
     \node[draw, rectangle, rounded corners, fit=(W_0) (RT_0) (Th_0) (H_0), label={[anchor= west,xshift=-30] west:$0$}] (plate_0) {};

       \node[below=of W_t1,yshift=.5cm] {\textcolor{white}{$\vdots$}};
   
\end{tikzpicture}
}
        
        \caption{Example of a DBN-template.}
        \label{fig:sub1}
    \end{subfigure}
     % This command inserts horizontal space, pushing the subfigures apart
     \hspace{-36pt}
    \begin{subfigure}[b]{0.38\textwidth}
        \centering
       
       \resizebox{.8\textwidth}{!}{
       \begin{tikzpicture}[
    % Define styles for nodes
    latent/.style={circle, draw=black, fill=white, inner sep=1pt, minimum size=2em},
    % Positioning setup
    node distance=.8cm and .8cm, % Vertical and horizontal distance (adjusted for horizontal layout)
    >=stealth' % Arrow style
]

    % --- Nodes for time slice t ---
    \node[latent] (W_t) {$L^1$};
    \node[latent, right=of W_t] (RT_t) {$O^1$}; % Changed to right
    \node[latent, right=of RT_t] (Th_t) {$S^1$}; % Changed to right
    \node[latent, right=of Th_t] (H_t) {$H^1$};    % Changed to right

    \node[above=of W_t,xshift=-2cm,yshift=-0.5cm] (anchor1) {};
    \node[above=of H_t,xshift=1cm,,yshift=-0.5cm] (anchor2) {};

     % --- Nodes for time slice 0 ---
    \node[latent, above=of W_t] (W_0) {$L^0$};
    \node[latent, right=of W_0] (RT_0) {$O^0$}; % Changed to right
    \node[latent, right=of RT_0] (Th_0) {$S^0$}; % Changed to right
    \node[latent, right=of Th_0] (H_0) {$H^0$};    % Changed to right

    % --- Nodes for time slice t+1 ---
    % Position the first node of t+1 relative to the first node of t
    \node[latent, below=of W_t] (W_t1) {$L^2$}; % Adjusted position for next slice
    \node[latent, right=of W_t1] (RT_t1) {$O^2$};
    \node[latent, right=of RT_t1] (Th_t1) {$S^2$};
    \node[latent, right=of Th_t1] (H_t1) {$H^2$};

    \node[below=of W_t1,yshift=.5cm] {$\vdots$};
      \node[below=of RT_t1,yshift=.5cm] {$\vdots$};
        \node[below=of Th_t1,yshift=.5cm] {$\vdots$};
          \node[below=of H_t1,yshift=.5cm] {$\vdots$};

  % --- Edges within time slice 0 ---
    \edge {W_0} {RT_0};
      \edge {H_0} {Th_0};

    % --- Edges within time slice t ---
 \edge {W_t} {RT_t};
      \edge {H_t} {Th_t};

    % --- Edges within time slice t+1 ---
 \edge {W_t1} {RT_t1};
      \edge {H_t1} {Th_t1};

   % --- Temporal Edges (from t to t+1) ---
    \edge {W_t} {W_t1};
    \edge {RT_t} {W_t1};
        \edge {W_t1} {RT_t1};
          \edge {H_t} {H_t1};
           \edge {Th_t} {H_t1};
      \edge {RT_t} {H_t1};
      \edge {H_t1} {Th_t1};

           % --- Temporal Edges (from 0 to t) ---
    \edge {W_0} {W_t};
    \edge {RT_0} {W_t};
        \edge {W_t} {RT_t};
          \edge {H_0} {H_t};
           \edge {Th_0} {H_t};
      \edge {RT_0} {H_t};
      \edge {H_t} {Th_t};

    % --- Plates for time slices ---
    % Adjusted fit coordinates to encompass horizontal nodes
    \node[draw, rectangle, rounded corners, fit=(W_t) (RT_t) (Th_t) (H_t), label={[anchor= west,xshift=-30] west:$1$}] (plate_t) {};
    \node[draw, rectangle, rounded corners, fit=(W_t1) (RT_t1) (Th_t1) (H_t1), label={[anchor= west,xshift=-30] west:$2$}] (plate_t1) {};
     \node[draw, rectangle, rounded corners, fit=(W_0) (RT_0) (Th_0) (H_0), label={[anchor= west,xshift=-30] west:$0$}] (plate_0) {};
   
\end{tikzpicture}
}

        \caption{Unfolding of the DBN-template.}
        \label{fig:sub2}
    \end{subfigure}
     \begin{subfigure}[b]{0.18\textwidth}
     \begin{center}
\begin{tabular}{c c |c c}
&& \multicolumn{2}{c}{$L'$} \\
$L$ & $O$ & 0 & 1 \\
\hline
0 & 0 & 0.2 & 0.8 \\
0 & 1 & 0.5 & 0.5 \\
1 & 0 & 0.3 & 0.7 \\
1 & 1 & 0 & 1 \\
\end{tabular}

\end{center}
\caption{Example of a CPD.}
\label{fig:sub3}
     \end{subfigure}
 
    \caption{The DBN-template described in Ex.~\ref{ex:intro} and its unfolding as well as an example CPD.}

    \label{fig:intro_figure}
\end{figure*}
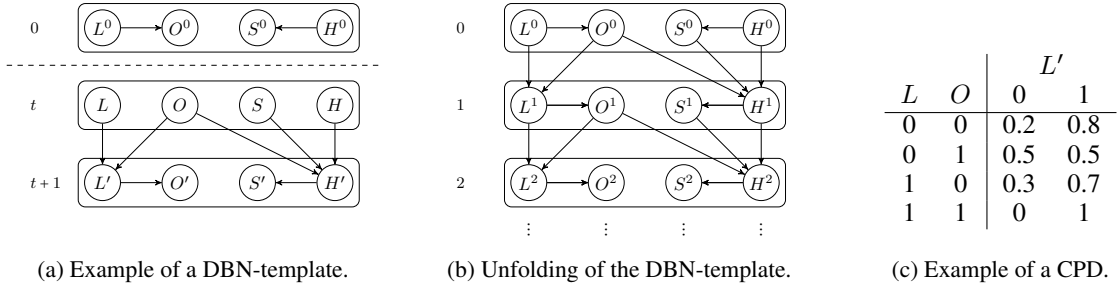

\begin{example}
\label{ex:intro}
To illustrate DBNs and DBN-templates, consider a  system coordinating different probabilistic components  either having access only to \emph{low-security} information
or also to \emph{high-security} information. In each time step $t$, the low-security components provide an input $L^t$ and the high-security components provide an input $H^t$.
The system then produces a low-security output $O^t$ and a high-security output $S^t$ ($S$ for secret).
The dependencies between these  variables are depicted in the 
 DBN-template depicted in Fig.~\ref{fig:sub1}: The \emph{initial template} marked with $0$ expresses that initially $O^0$ depends only on $L^0$ and $S^0$ depends only on $H^0$.
 All other pairs of variables are independent.
 The \emph{step template} depicted below uses the variables $L$, $O$, $S$, and $H$ representing the variables at the current time step as well as copies $L'$, $O'$, $S'$, and $H'$ representing 
 the variables at the next time step. For example, this template captures  that the next low-security input $L^{t+1}$ always depends directly only on the previous low-security input $L^t$ and output  $O^t$.
To represent all (direct) dependencies between variables in the timed sequence of variables, the template can be unfolded into one infinite directed acyclic graph (DAG), called the \emph{unfolding of the DBN-template},
as depicted in Fig.~\ref{fig:sub2}.

A DBN based on this DBN-template additionally consists of the CPDs  for each variable  in  the initial template and for each primed variable in the step template given its parents. Assuming that all variables 
take only values $0$ and $1$, an example CPD for the variable $L'$ in the step template is depicted in Fig.~\ref{fig:sub3}. For each combination of values of the 
parent variables $L$ and $O$, it specifies the probability with which $L'$ takes value $0$ and $1$. 
This CPD is  applied at each time step in the unfolding of the DBN.
For variables without parents, such as $L^0$ and $H^0$ in the initial template,
the CPDs  specify the probabilities with which they take value $0$ and $1$.
\end{example}

A fundamental concern (see, e.g., \citet{Hman1995}) in reasoning about probabilistic models is the characterization and/or deduction of stochastic conditional independence (CI) of sets $\mathbf{X}$ and $\mathbf{Y}$ of random variables given the values of a set $\mathbf{Z}$, denoted $(\mathbf{X} \probind \mathbf{Y} \mid \mathbf{Z})$. 

In seminal work, \cite{geiger1990} show that the truth of these CI propositions, which satisfy the so-called graphoid axioms, can be deduced from the underlying DAG template in the case of BNs.
Specifically, they define structural conditional independence through the efficiently testable graphical notion of \emph{d-separation}, denoted as $(\mathbf{X} \ind \mathbf{Y} \mid \mathbf{Z})$ when $\mathbf{Z}$ d-separates $\mathbf{X}$ and $\mathbf{Y}$.
They then show soundness: if $(\mathbf{X} \ind \mathbf{Y} \mid \mathbf{Z})$, then $(\mathbf{X} \probind \mathbf{Y} \mid \mathbf{Z})$ for all conditional probability distributions. Subsequently, \citet{Meek} showed a form of completeness: i.e., if $(\mathbf{X} \ind \mathbf{Y} \mid \mathbf{Z})$ fails then $(\mathbf{X} \probind \mathbf{Y} \mid \mathbf{Z})$ also fails for all but a (Lebesgue) measure-$0$ set of conditional probability distribution parameters.

DBNs can naturally be associated with an infinite sequence of BNs, with the $t$-th term being obtained by unfolding up to time slice $t$. We call a statement of the form $(\mathbf{X} \ind \mathbf{Y} \mid \mathbf{Z})$ (respectively, $(\mathbf{X} \probind \mathbf{Y} \mid \mathbf{Z})$) an atomic proposition of structural (respectively, stochastic) CI, and say that it holds at time $t$ if $(\mathbf{X}^t \ind \mathbf{Y}^t \mid \mathbf{Z}^t)$ (respectively, $(\mathbf{X}^t \probind \mathbf{Y}^t \mid \mathbf{Z}^t)$) holds in the unfolding of the DBN up to time $t$. Given a collection $A$ of structural (respectively, stochastic) CI propositions, the DBN defines a \emph{trace}, i.e., an infinite word over the alphabet $2^A$ whose $t$-th position records which of the propositions hold at time $t$.

In this paper, we concern ourselves with checking the properties of the trace, such as: is $(\mathbf{X} \ind \mathbf{Y} \mid \mathbf{Z})$ ever false? In Ex.~\ref{ex:intro}, is the output always independent of the secret given the low-security input? In a system, is it always the case that if inputs $I_1, I_2$ are independent, then so are outputs $O_1, O_2$?

To express temporal properties of systems, the use of temporal logics such as linear temporal logic (LTL) and of non-deterministic B\"uchi automata (NBAs), capturing all $\omega$-regular languages, has emerged as a success story over the past decades (see, e.g., \cite{baierkatoen}).
We aim to employ these formalisms to talk about the temporal aspects of CIs.

\begin{example}
\label{ex:temporal}
The three properties mentioned above are expressed in LTL as: (i) $\LTLeventually \neg(\mathbf{X} \ind \mathbf{Y} \mid \mathbf{Z})$, where $\LTLeventually$ is the temporal modality for ``eventually''; (ii) $\LTLglobally (O \ind S \mid L)$, where $\LTLglobally$ is the temporal modality for ``globally'', and dual to $\LTLeventually$; (iii) $\LTLglobally ((I_1 \ind I_2) \rightarrow (O_1 \ind O_2))$.
\end{example}

Given an LTL formula $\varphi$ over the set of atomic propositions $A$ or an NBA $\mathcal{B}$ over the alphabet $2^A$, 
the \emph{structural CI model-checking problem for DBN-templates} now asks whether the trace of a DBN-template
satisfies $\varphi$ or is accepted by $\mathcal{B}$, respectively.
The  \emph{stochastic CI model-checking problem for DBNs} asks the analogous question for the trace of a DBN with respect to a set of stochastic CI propositions. 

\subsection{Contributions}

\begin{enumerate}
\item
In Sec.~\ref{section:model-checking}, we introduce temporal specification mechanisms  
for the evolution of structural or stochastic CI propositions in DBN-templates and DBNs, respectively, using LTL and NBAs.
We formulate the resulting structural and stochastic CI model-checking problems.

\item
In Sec.~\ref{section:structural}, we show that the structural CI model-checking problems of DBN-templates against LTL formulas and against NBAs are both in PSPACE and NP-hard as well as coNP-hard.
Under the natural restriction that the initial template of a DBN-template only contains edges that also appear as intra-slice edges in the step template, we prove that the problems are in P.

\item
Given full DBNs with CPDs, we show in Sec.~\ref{section:skolem} that checking eventual stochastic CI is as hard as the Skolem problem for linear recurrence sequences, a famous
number-theoretic problem whose decidability status has been open for many decades.
This implies that a decidability result for the stochastic CI model-checking problems is out of reach without a breakthrough in analytic number theory.

\end{enumerate}

\subsection{Related Work}
The question how to detect structural CIs in BNs has been answered in the 1980s and 1990s by showing that d-separation characterizes all structural CIs that follow from the structure
of a BN, that this is equivalent to stochastic CI under almost all choices of CPDs, and by showing that the d-separation can compute these structural CIs in polynomial time (see \cite{pearlbook,geiger1990,Meek}).
Exactly determining whether a stochastic CI holds requires exact computation of the necessary conditional probability distributions.
Methods for approximate testing of conditional independence of discrete random variables, however, are an active area of research (see, e.g., 
\cite{DBLP:conf/stoc/CanonneDKS18,Teymur2020}).
Orthogonally, seminal work by \citet{Boutilier1996Context} studies so-called context-sensitive independence expressing that variables might only be independent under specific  assignments of values to other variables. Like stochastic CI, this kind of independence  depends on the concrete CPDs.

We are not aware of thorough studies of d-separation and the detection of CIs in DBNs, let alone the formal verification of temporal properties of CIs in DBNs.
Regarding other extension of BNs,
\citet{DBLP:conf/icml/ShenHCD19} study CIs in testing BNs, an extension of BNs representing a set of probability distributions instead of a singly distributions, 
and show that d-separation can still be used to detect structural CIs.

\section{Preliminaries}
	\paragraph{Probability spaces and conditional independence.}
	We assume knowledge of the basics of probability theory \cite{Klenketextbook}, and record the relevant prerequisites for completeness. A probability space is a measure space given by a triple $(\Omega, \mathcal{F}, P)$. Here, $\Omega$ is the sample space, i.e., a non-empty set of mutually exclusive and collectively exhaustive outcomes (e.g., the set $\{\text{HH, HT, TH, TT}\}$ of outcomes of two coin tosses). The set of events $\mathcal{F}$ is a $\sigma$-algebra, i.e., a set of subsets of $\Omega$, called \emph{events}, which includes $\Omega$, and is closed under complement, countable union, and hence also countable intersection by De Morgan's law (in our example we can define $\mathcal{F}$ to have $16$ events). Finally, $P: \mathcal{F} \rightarrow [0, 1]$ is the probability measure which satisfies $P(\Omega) = 1$ and $\sigma$-additivity, i.e., if $\{A_i\}_i$ is a countable collection of pairwise disjoint sets, then $P\left(\bigcup_i A_i \right) = \sum_i P(A_i)$ (e.g., $P(\{\}) = 0, P(\{\text{HH}\}) = 1/4, P(\{\text{HT}\}) = 1/4, P(\{\text{TH}\}) = 1/3, P(\{\text{TT}\}) = 1/6$).
	
	A random variable $X$ on a probability space $(\Omega, \mathcal{F}, P)$ is a measurable function from $\Omega$ to a measurable space $S$\footnote{We equip $S$ with $\sigma$-algebra $\mathcal{F}'$ and measure $\mu: \mathcal{F}' \rightarrow \mathbb{R}$.}, called its \emph{support}, i.e., for any measurable $T \subseteq S$, $X^{-1}(T) \in \mathcal{F}$. We call $X$ discrete if $S$ is countable, and binary if $S = \{0, 1\}$. In this paper, we shall work with discrete random variables, which will often be binary for simplicity. Given $x \in S$, we write $p_X(x) = \Pr[X = x] = P(\{\omega \in \Omega : X(\omega) = x\})$. For example, for the random variable $X$ which indicates whether the second coin shows Heads, $p_X(1) = 7/12$. We analogously define $p_\mathbf{X}: \mathbf{S} \rightarrow [0, 1]$ for a finite tuple $\mathbf{X}$ of discrete random variables with support $\mathbf{S}$.

In this paper, we work with discrete random variables. Disjoint tuples of random variables $\mathbf{X}, \mathbf{Y}$ are considered conditionally independent given $\mathbf{Z}$ (denoted as $(\mathbf{X} \probind \mathbf{Y} \mid \mathbf{Z})$), if for any values $\mathbf{x}, \mathbf{y}, \mathbf{z}$ (provided $\Pr[\mathbf{Z} = \mathbf{z})] > 0$, the following holds: $\Pr(\mathbf{X} = \mathbf{x}, \mathbf{Y} = \mathbf{y}\mid \mathbf{Z}=\mathbf{z}) = \Pr(\mathbf{X} = \mathbf{x}\mid \mathbf{Z} = \mathbf{z}) \cdot \Pr(\mathbf{Y} = \mathbf{y}\mid \mathbf{Z} = \mathbf{z})$.

\paragraph{Bayesian networks}
A Bayesian network (BN) is a type of probabilistic graphical model that expresses a set of variables and their conditional dependencies using a directed acyclic graph (DAG), where each node represents a variable, and the edges indicate direct probabilistic dependencies between the variables.

\begin{definition}[Bayesian Network]
	A Bayesian network (BN) over a finite set $\mathbf{V}$ of discrete random variables is a tuple $B = \langle \mathbf{V}, \mathcal{E}, \mathcal{P} \rangle$, where:
	\begin{itemize}
		\item Each element of $\mathbf{V}$ is represented as a vertex of a DAG;
		\item The set of directed edges is $\mathcal{E} \subseteq \mathbf{V} \times \mathbf{V}$;
		we call the DAG $\langle \mathbf{V}, \mathcal{E}\rangle$ the \emph{template} of the BN.
		\item The probability distribution of $\mathbf{V}$ is expressed in terms of a collection $\mathcal{P}$ of conditional probability distributions (CPDs), i.e., for each variable $X$ with parents $ \mathsf{pa}(X) = \mathbf{Y}$, $\mathcal{P}$ prescribes $\Pr[X = x \mid \mathbf{Y} = \mathbf{y}]$ for all possible $x, \mathbf{y}$.
\end{itemize}
We refer to the set of all BNs with a given BN-template $\mathcal{T}$ as the family $\mathsf{Fam}(\mathcal{T})$.
\end{definition}

As an illustrative scenario, consider a BN where all variables are binary. Then $\mathcal{P}$ consists of $\sum_{X \in \mathbf{V}} 2^{|\mathsf{pa}(X)|}$ parameters, each term of the summation counting the number of parameters required to prescribe the probability of $X$ being $1$, depending on the values taken by its parents.

\begin{definition}[d-paths and d-separation]
Given a BN-template (i.e., DAG) $\mathcal{T}=(\mathbf{V},\mathcal{E})$ and three pairwise disjoint sets $\mathbf{X}, \mathbf{Y}, \mathbf{Z} \subseteq \mathbf{V}$ of nodes, a d-path from $\mathbf{X}$ to $\mathbf{Y}$ with respect to $\mathbf{Z}$ is a sequence $W_0, \ldots, W_k$ of nodes with the following properties:
\begin{itemize}
\item $W_0\in \mathbf{X}$ and $W_k\in \mathbf{Y}$,
\item for each $i<k$, either $(W_i, W_{i+1}) \in \mathcal{E}$ or $(W_{i+1}, W_i) \in \mathcal{E}$
\item  for all $0<i<k$, if the node $W_i$ has an outgoing edge to $W_{i-1}$ or $W_{i+1}$ (or both), then $W_i\not\in \mathbf{Z}$,
\item  for all $0<i<k$, if the node $W_i$ has incoming edges from both $W_{i-1}$ and $W_{i+1}$, then one of the descendants of $W_i$ is in $\mathbf{Z}$ (we consider a node to be its own descendant and ancestor).
We call such a node $W_i$ a \emph{collider} and say that the collision is \emph{attributed} to the descendants of $W_i$ in $\mathbf{Z}$.
\end{itemize}
If there is no such path, we say that $\mathbf{Z}$ d-separates $\mathbf{X}$ and $\mathbf{Y}$.  In this case, we write $(\mathbf{X} \ind \mathbf{Y} \mid \mathbf{Z})$ and say that $\mathbf{X}$ and $\mathbf{Y}$ are \emph{structurally independent} given $\mathbf{Z}$.
\end{definition}

We remark that both structural and stochastic conditional independence (CI) satisfy the \emph{graphoid} axioms \cite[p.\ 511, (4a)-(4d)]{geiger1990} (see also \cite{spohn1980} for a proof of the stochastic case). Structural CI via d-separation in a BN-template $\mathcal{T}$ is known to be equivalent to stochastic CI in all members of the family $\mathsf{Fam}(\mathcal{T})$. 
%This is stated formally in the following theorem, which is also known as the soundness and completeness of d-separation.

\begin{theorem}[Soundness and completeness of d-separation, \cite{pearlbook}, \cite{Meek}]
\label{thm:BN-equivalence}
Given a BN-template $\mathcal{T}=\langle \mathbf{V}, \mathcal{E} \rangle$ and pairwise disjoint sets of random variables $\mathbf{X}, \mathbf{Y}, \mathbf{Z} \subseteq \mathbf{V}$, the following two statements are equivalent and can be checked in polynomial time:
\begin{itemize}
\item $(\mathbf{X} \ind \mathbf{Y} \mid \mathbf{Z})$,
\item for all BNs in $\mathsf{Fam}(\mathcal{T})$, we have $(\mathbf{X} \probind \mathbf{Y} \mid \mathbf{Z})$, i.e., $\mathbf{X}$ and $\mathbf{Y}$ are (stochastically) independent given $\mathbf{Z}$.
\end{itemize}
\end{theorem}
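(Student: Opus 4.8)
The statement bundles three claims: a polynomial-time decision procedure for d-separation, \emph{soundness} ($\ind$ in $\mathcal{T}$ forces stochastic CI in every member of $\mathsf{Fam}(\mathcal{T})$), and \emph{completeness} (failure of $\ind$ forces failure of stochastic CI in almost every member). The plan is to treat each in turn, following the classical arguments of \citet{pearlbook} and \citet{Meek}. For the algorithmic part I would reduce d-separation to reachability: first compute $\mathrm{An}(\mathbf{Z})$, the ancestors of $\mathbf{Z}$ together with $\mathbf{Z}$ itself, by a backward graph traversal; then run a Bayes-ball-style breadth-first search over the state space of pairs (node, direction-of-arrival), where from a node reached along an arrow one may proceed to children and — only if the node is not in $\mathbf{Z}$ — to parents, and from a node reached against an arrow one may proceed to parents if the node is not in $\mathbf{Z}$ and to children only if the node lies in $\mathrm{An}(\mathbf{Z})$ (this last clause being exactly the collider condition). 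Each edge is processed a constant number of times, so the test runs in near-linear, hence polynomial, time.

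For soundness I would use the ancestral-restriction-plus-moralization argument. Restrict the template to the induced sub-DAG on $\mathbf{A} := \mathrm{An}(\mathbf{X}\cup\mathbf{Y}\cup\mathbf{Z})$; the factorization $p(\mathbf{a}) = \prod_{W\in\mathbf{A}} p(w\mid \mathsf{pa}(W))$ survives marginalization of the remaining variables because every parent of a node in $\mathbf{A}$ is again in $\mathbf{A}$. Form the moral graph of this sub-DAG by marrying co-parents and dropping orientations; the displayed factorization is then a product of non-negative factors each supported on a clique of the moral graph, and the easy (``factorization implies Markov'') direction of the Hammersley--Clifford theory shows that undirected separation in the moral graph entails the corresponding stochastic CI. It remains to check that $(\mathbf{X}\ind\mathbf{Y}\mid\mathbf{Z})$ in $\mathcal{T}$ implies that $\mathbf{Z}$ separates $\mathbf{X}$ from $\mathbf{Y}$ in this moral graph: any connecting trail in the moral graph lifts to a d-path in $\mathcal{T}$, with each ``married'' edge corresponding to a collider whose common child is an ancestor of $\mathbf{Z}$ — which is precisely where the restriction to the ancestral set $\mathbf{A}$ is needed. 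Chaining these facts yields $(\mathbf{X}\probind\mathbf{Y}\mid\mathbf{Z})$ for every choice of CPDs.

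For completeness I would argue the contrapositive. Assuming $(\mathbf{X}\ind\mathbf{Y}\mid\mathbf{Z})$ fails, fix a simple active d-path $W_0,\dots,W_k$ with $W_0\in\mathbf{X}$ and $W_k\in\mathbf{Y}$. Parametrize the (binary) CPDs by the vector $\theta$ of conditional probabilities; for fixed values $\mathbf{x},\mathbf{y},\mathbf{z}$ with $\Pr[\mathbf{Z}=\mathbf{z}]>0$, the discrepancy $\Pr(\mathbf{X}=\mathbf{x},\mathbf{Y}=\mathbf{y}\mid\mathbf{Z}=\mathbf{z}) - \Pr(\mathbf{X}=\mathbf{x}\mid\mathbf{Z}=\mathbf{z})\Pr(\mathbf{Y}=\mathbf{y}\mid\mathbf{Z}=\mathbf{z})$ is a rational function of $\theta$ whose numerator is a polynomial $f_{\mathbf{x},\mathbf{y},\mathbf{z}}(\theta)$, and stochastic CI holds at $\theta$ iff all these polynomials vanish there. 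Since a polynomial that is not identically zero vanishes only on a Lebesgue-null set, and a finite union of null sets is null, it suffices to produce one parameter point $\theta^\ast$ at which some $f_{\mathbf{x},\mathbf{y},\mathbf{z}}$ is nonzero. I would build $\theta^\ast$ by making every variable off the path a constant, making each non-collider on the path copy its incoming neighbour's value, and at each collider $W_i$ — whose collision is attributed to a descendant $D_i\in\mathbf{Z}$ — setting $W_i$ to the exclusive-or of its two path-neighbours and propagating this bit deterministically down to $D_i$, so that conditioning on $D_i=d_i$ ties the two sides of the collider together. A small $\varepsilon$-perturbation keeps every conditioning event positive-probability while, by continuity, preserving the inequality $f_{\mathbf{x},\mathbf{y},\mathbf{z}}(\theta^\ast)\neq 0$ for the values $\mathbf{x},\mathbf{y},\mathbf{z}$ read off along the path.

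The main obstacle is the completeness gadget. One must choose the witness path to be simple and handle the interaction of its colliders with $\mathbf{Z}$ carefully — several colliders may share a descendant, a descendant may coincide with a path node or with a node of $\mathbf{X}$ or $\mathbf{Y}$, and crucially $\mathbf{X}$, $\mathbf{Y}$, $\mathbf{Z}$ are \emph{tuples}, not singletons, so the bookkeeping of which coordinates carry the dependence is nontrivial. One must also guarantee that all conditioning events retain positive probability, which forces the $\varepsilon$-perturbation and an accompanying continuity argument rather than a purely deterministic construction, and then verify that $f\not\equiv 0$ genuinely survives the perturbation. By contrast, the soundness direction is routine once the moralization lemma is in hand, and the polynomial-time claim is immediate from the reachability formulation.
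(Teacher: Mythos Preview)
The paper does not supply its own proof of this theorem: it is stated with attributions to \cite{pearlbook} and \cite{Meek} and thereafter used as a black box (for instance in the proof of Proposition~\ref{sound&complete}). There is therefore nothing in the paper to compare your proposal against.

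That said, your plan is precisely the classical one from the cited sources --- polynomial time via a reachability/Bayes-ball formulation, soundness via restriction to the ancestral set followed by moralisation and the factorisation-implies-global-Markov direction, and completeness via Meek's argument that the unfaithful parameters lie in a finite union of proper algebraic hypersurfaces and hence a Lebesgue-null set, with an explicit active-path witness showing the relevant polynomial is not identically zero. The difficulties you flag for the witness construction (shared collider descendants, overlap with $\mathbf{X}\cup\mathbf{Y}\cup\mathbf{Z}$, positivity via $\varepsilon$-perturbation) are exactly the ones that need care. One minor slip worth fixing when you write this up: your Bayes-ball transition rules look scrambled. The collider case --- the one requiring the current node to lie in $\mathrm{An}(\mathbf{Z})$ --- arises when you arrive \emph{from a parent} and leave \emph{to another parent}; arriving from a parent and leaving to a child is a chain (blocked iff the node is in $\mathbf{Z}$), and arriving from a child and leaving to a child is a fork (likewise blocked iff in $\mathbf{Z}$). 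As you have written them, the rules would let chains pass through observed nodes and would block forks at unobserved non-ancestors of $\mathbf{Z}$.
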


\paragraph{Dynamic Bayesian networks (DBNs).}
We use Dynamic Bayesian networks \cite{murphyphd,koller2009probabilistic} to model probabilistic systems where the entities captured by random variables evolve over time. Formally, consider a finite set $\mathbf{V}$ of random variables. We track the evolution of $\mathbf{V}$ with time through a countably infinite sequence $(\mathbf{V}^t)_{t=0}^\infty$ of copies of the random variables in $\mathbf{V}$.

The evolution itself is modeled as a dynamical system whose state at time $t$ is a BN involving the variables $\bigcup_{i=0}^t \mathbf{V}^t$. The initial BN is given by $\langle \mathbf{V}^0, \mathcal{E}^0, \mathcal{P}^0 \rangle$.

We use a copy $\mathbf{V}'$ of the random variables to express the update dynamics, which at each step $t \ge 1$, introduce the variables $\mathbf{V}^t$ with parents in $\mathbf{V}^{t-1} \cup \mathbf{V}^{t}$, while keeping the rest of the network unchanged. Formally, we have a \emph{step template} which is a BN-template with variables $\mathbf{V} \cup \mathbf{V}'$ and edge relation $\mathcal{E}^\mathsf{step} \subseteq (\mathbf{V} \cup \mathbf{V}') \times \mathbf{V}'$. Semantically, if $X'$ has parents $Y_{i_1}', \ldots, Y_{i_\ell}', Y_{j_1}, \ldots, Y_{j_k}$ in the step template, then for each $t \ge 1$, $X^t$ has parents $Y_{i_1}^t, \ldots, Y_{i_\ell}^t, Y_{j_1}^{t-1}, \ldots, Y_{j_k}^{t-1}$. Finally, we have the conditional probability distribution parameters $\mathcal{P}^\mathsf{step}$ which prescribe $\Pr[X' = x \mid \mathsf{pa}(X') = \mathbf{y}]$ for all $x, \mathbf{y}$. Semantically, we have that the distribution of $X^t$ conditioned on its parents is the same for all $t \ge 1$.

We can thus specify a DBN via $\langle \mathbf{V}, \mathcal{E}^0, \mathcal{P}^0, \mathcal{E}^\mathsf{step}, \mathcal{P}^\mathsf{step} \rangle$. Its structural properties are given by the tuple $\mathcal{T}_\mathsf{DBN} = \langle \mathbf{V}, \mathcal{E}^0,  \mathcal{E}^\mathsf{step} \rangle$, which we refer to as a \emph{DBN-template}. Analogous to BNs, we refer to the set of all DBNs with a given template $\mathcal{T}_\mathsf{DBN}$ as the family $\mathsf{Fam}(\mathcal{T}_\mathsf{DBN})$.
As explained earlier, in a DBN of binary variables, $\mathcal{P}^0$ consists of $\sum_{X \in \mathbf{V}^0} 2^{|\mathsf{pa}(X)|}$ parameters, and $\mathcal{P}^\mathsf{step}$ consists of $\sum_{X \in \mathbf{V}'} 2^{|\mathsf{pa}(X)|}$ parameters.

We say that a DBN-template is \emph{restricted} if whenever $(X^0, Y^0) \in \mathcal{E}^0$, we also have that $(X', Y') \in \mathcal{E}^\mathsf{step}$. 
So, the dependencies that exist initially at time step $0$ also have to be present at all later time steps, which is captured by their presence in the step template between the corresponding primed variables. The DBN-template depicted in Fig.~\ref{fig:intro_figure} is an example of a restricted DBN template.

\paragraph{Equivalence of DBNs with Markov Chains.}
That the semantics of a DBN can be expressed in terms of the evolution of a Markov chain is folklore. For completeness, we state an elementary lemma expressing this fact formally.
\begin{lemma}
\label{lemma:unfolding}
Given a DBN with $k$ binary random variables, we can construct an equivalent Markov chain with $2^k$ states. 

Given a Markov chain with $K$ states, we can construct an equivalent DBN with $\lceil \log K \rceil$ binary random variables.
\end{lemma}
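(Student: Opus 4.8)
\emph{Proof proposal.} Call two processes \emph{equivalent} if, under a fixed correspondence of their state spaces, they induce the same law on infinite trajectories; for a DBN this trajectory is the sequence $(\mathbf{V}^t)_{t\ge 0}$ of joint assignments. Both directions are folklore, and the plan is simply to exhibit the constructions and verify equivalence by unwinding BN factorizations. For the DBN-to-Markov-chain direction, I would take the chain to have state space the set $\{0,1\}^{\mathbf{V}}$ of $2^k$ joint assignments, with initial distribution $\mu_0(\mathbf{v}) = \prod_{X\in\mathbf{V}}\Pr[X^0 = v_X \mid \mathsf{pa}(X^0)]$ (conditioning values read off $\mathbf{v}$), which is a genuine distribution since $\langle\mathbf{V}^0,\mathcal{E}^0\rangle$ is a DAG, and with transition probability $P(\mathbf{v},\mathbf{v}') = \prod_{X'\in\mathbf{V}'}\Pr[X' = v'_X \mid \mathsf{pa}(X')]$, where each parent's value comes from $\mathbf{v}$ or $\mathbf{v}'$ according to whether it is unprimed or primed; summing $\mathbf{v}'$ out in reverse topological order of $\mathbf{V}'$ confirms $P(\mathbf{v},\cdot)$ is a distribution. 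Equivalence then follows because the unfolding of the DBN up to slice $t$ is a BN whose DAG is the initial template glued to $t$ copies of the step template; its joint law factorizes as $\prod_{X\in\mathbf{V}^0}\Pr[\cdot] \cdot \prod_{s=1}^{t}\prod_{X\in\mathbf{V}^s}\Pr[X^s \mid \mathsf{pa}(X^s)]$, and grouping the factors by slice rewrites this as $\mu_0(\mathbf{v}^0)\prod_{s=1}^{t} P(\mathbf{v}^{s-1},\mathbf{v}^s)$, the trajectory measure of the chain.

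For the Markov-chain-to-DBN direction, let the chain have states $\{1,\dots,K\}$, initial law $\mu_0$, and transition matrix $P$; set $n = \lceil\log_2 K\rceil$ and fix an injective encoding $\mathrm{enc}\colon\{1,\dots,K\}\to\{0,1\}^n$. I would use $n$ binary variables $V_1,\dots,V_n$, make the initial template complete for the order $V_1<\dots<V_n$ (so $\mathsf{pa}(V_i)=\{V_1,\dots,V_{i-1}\}$), and let the step template have $\mathsf{pa}(V_i')=\{V_1,\dots,V_n\}\cup\{V_1',\dots,V_{i-1}'\}$; both templates are DAGs. Since a complete ordered BN-template can represent \emph{any} joint distribution --- its CPDs are exactly the chain-rule conditionals --- I would choose $\mathcal{P}^0$ so that $\Pr[\mathbf{V}^0=\mathrm{enc}(s)]=\mu_0(s)$ for every state $s$ (and $0$ on strings outside $\mathrm{enc}(\{1,\dots,K\})$), and choose $\mathcal{P}^\mathsf{step}$ so that $\Pr[\mathbf{V}'=\mathrm{enc}(s')\mid\mathbf{V}=\mathrm{enc}(s)]=P(s,s')$ for all states $s,s'$, filling in the remaining (probability-$0$) parent configurations arbitrarily. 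The same factorization argument as above, together with an induction on $t$ showing that the slice-$t$ distribution stays supported on $\mathrm{enc}(\{1,\dots,K\})$, then shows that the DBN's trajectory law is the $\mathrm{enc}$-pushforward of the chain's.

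The statement is elementary, so there is no deep obstacle; the two points that need care are (i) when $K$ is not a power of two, the $2^n-K$ ``junk'' strings must receive initial probability $0$ and must never be reached, so that the arbitrary CPD choices on junk parent configurations are harmless, and (ii) one must check that the slice-wise product $\prod_{X'\in\mathbf{V}'}\Pr[X'\mid\mathsf{pa}(X')]$ really is a stochastic kernel, which relies on a topological order of $\mathbf{V}'$ together with the chain rule. The inductive/factorization verification of equivalence is the only genuinely substantive step, and it is routine.
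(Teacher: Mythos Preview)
Your proposal is correct and follows essentially the same approach as the paper: binary-encode the Markov chain state using $\lceil \log K\rceil$ bits and recover the transition kernel via chain-rule CPDs in a totally ordered step template, and conversely read off a $2^k$-state chain from the DBN's slice-wise factorization. The only cosmetic difference is that the paper handles non-power-of-two $K$ by padding with self-loop states, whereas you assign the unused encodings initial probability $0$; both are standard and equally valid.
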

\begin{proof}
To show the first claim, we describe how to obtain a Markov chain from a DBN:
Each of the states of the Markov chain (labelled $0, 1, \ldots, 2^{k-1}$) indicates, through the binary expansion of the label, one of the $2^k$ possible configurations of the random variables. E.g., the state $1$ indicates all variables are $0$, except $X_0 = 1$. 

The conditional probabilities in the step-template network enable us to compute the update matrix $M$ of the Markov chain, column by column (our linear algebraic convention takes the matrix to be column-stochastic, i.e., the columns are distributions and sum up to $1$). The $(\ell, m)$-th entry is the probability of the configuration being $\ell$, given the previous configuration was $m$.

For the second claim, we show how to encode a Markov chain in a DBN:
We assume, without loss of generality, that the Markov chain has $K = 2^{k+1}$ states, labelled $0, 1, \ldots, 2^k - 1$. This can be done by adding extra states whose only transitions are self-loops. Our main idea is to perform the previous construction ``in reverse.''

To that end, we name the required random variables $X_0, \ldots, X_k$, and interpret their configuration at the current time slice as indicating the state the Markov chain is in. In the step-template network, each $X'_i$ depends upon $X_0, \ldots, X_k, X'_k, \ldots, X'_{i+1}$.  The idea is the same as that of a binary search: given the previous state of the Markov chain was $m$ (given by $X_k, \ldots, X_0$), we first consider the conditional distribution of the most significant bit $X_k'$ of the current state, and then that of the $(k-1)$-st bit $X_{k-1}'$ given $X_k, \ldots, X_0, X_k'$, and so on.
\end{proof}

\section{Specification Formalisms for Temporal Conditional Independence Properties}
\label{section:model-checking}

In this paper, we study the verification of DBNs
and DBN-templates against linear-temporal properties regarding the evolution of conditional independencies (CIs) over time.
For DBN-templates 
 with variables $\mathbf{V}$ we use atomic propositions of the form $(\mathbf{X} \ind \mathbf{Y} \mid \mathbf{Z})$, which we call \emph{structural CI propositions}.
 We say that $(\mathbf{X} \ind \mathbf{Y} \mid \mathbf{Z})$ holds in DBN-template $\mathcal{T}$ at time $t$ if $(\mathbf{X}^t \ind \mathbf{Y}^t \mid \mathbf{Z}^t)$ holds in the unfolding of $\mathcal{T}$  to a BN-template after $t$ time steps.
 Similarly, for full DBNs with concrete CPDs, we use \emph{stochastic CI propositions}  $(\mathbf{X} \probind \mathbf{Y} \mid \mathbf{Z})$ that hold at
 time $t$ in a DBN $\mathcal{B}$ if  $(\mathbf{X}^t \probind \mathbf{Y}^t \mid \mathbf{Z}^t)$ holds  in the BN formed at time $t$. 
A first result connecting structural CI and stochastic CI is immediate.
\begin{proposition}
	\label{sound&complete}
	If  $(\mathbf{X}^t\ind \mathbf{Y}^t \mid \mathbf{Z}^t)$ in a DBN-template $\mathcal{T}$ for some $t$, then,  $(\mathbf{X}^t\probind \mathbf{Y}^t\mid \mathbf{Z}^t)$ in every DBN $\mathcal{B}\in \Fam(\mathcal{T})$.
\end{proposition}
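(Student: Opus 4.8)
The plan is to reduce the statement to the soundness direction of Theorem~\ref{thm:BN-equivalence} by passing from the DBN-template to its finite unfolding. Fix the time horizon $t$ and an arbitrary DBN $\mathcal{B} = \langle \mathbf{V}, \mathcal{E}^0, \mathcal{P}^0, \mathcal{E}^\mathsf{step}, \mathcal{P}^\mathsf{step}\rangle \in \Fam(\mathcal{T})$. First I would make explicit the finite BN-template $\mathcal{T}_t = \langle \mathbf{V}_{\le t}, \mathcal{E}_t\rangle$ obtained by unfolding $\mathcal{T}$ for $t$ steps, where $\mathbf{V}_{\le t} = \bigcup_{i=0}^t \mathbf{V}^i$ and $\mathcal{E}_t$ consists of the edges of $\mathcal{E}^0$ among the copies in $\mathbf{V}^0$ together with, for each $1 \le s \le t$, the copy of each edge of $\mathcal{E}^\mathsf{step}$ placed between the appropriate time-$s$ and time-$(s-1)$ (or time-$s$) copies, exactly as dictated by the DBN semantics. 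By the definition of when a structural CI proposition holds in a DBN-template, the hypothesis $(\mathbf{X}^t \ind \mathbf{Y}^t \mid \mathbf{Z}^t)$ is precisely the statement that $\mathbf{Z}^t$ d-separates $\mathbf{X}^t$ and $\mathbf{Y}^t$ in the DAG $\mathcal{T}_t$.

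Next I would exhibit the concrete BN $B_t$ induced by $\mathcal{B}$ on $\mathcal{T}_t$: attach to each variable in $\mathbf{V}^0$ its CPD from $\mathcal{P}^0$, and to each variable in $\mathbf{V}^s$ for $1 \le s \le t$ the CPD from $\mathcal{P}^\mathsf{step}$ applied to its parents (which, by the construction of $\mathcal{T}_t$, are exactly the parents of that node in $\mathcal{T}_t$). Hence $B_t \in \Fam(\mathcal{T}_t)$. I would then argue that the joint distribution over $\mathbf{V}_{\le t}$ defined by the DBN semantics of $\mathcal{B}$ coincides with the joint distribution defined by $B_t$ via the usual BN factorization $\prod_{W \in \mathbf{V}_{\le t}} \Pr[W \mid \mathsf{pa}_{\mathcal{T}_t}(W)]$: the initial BN prescribes the law of $\mathbf{V}^0$ as a product over $\mathcal{P}^0$, and each step prescribes the law of $\mathbf{V}^s$ conditioned on its parents (in $\mathbf{V}^{s-1}\cup\mathbf{V}^s$) as a product over $\mathcal{P}^\mathsf{step}$, so composing the chain rule over $s = 0,1,\ldots,t$ yields exactly the factorization associated with $\mathcal{T}_t$; alternatively, one can invoke Lemma~\ref{lemma:unfolding}.

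Finally, since $\mathbf{X}^t, \mathbf{Y}^t, \mathbf{Z}^t \subseteq \mathbf{V}_{\le t}$, the truth of $(\mathbf{X}^t \probind \mathbf{Y}^t \mid \mathbf{Z}^t)$ depends only on the joint law of $\mathbf{X}^t \cup \mathbf{Y}^t \cup \mathbf{Z}^t$, which is a marginal of the joint law over $\mathbf{V}_{\le t}$ and therefore the same whether computed in $\mathcal{B}$ or in $B_t$. Applying the soundness direction of Theorem~\ref{thm:BN-equivalence} to $\mathcal{T}_t$ and $B_t \in \Fam(\mathcal{T}_t)$, using that $\mathbf{Z}^t$ d-separates $\mathbf{X}^t$ and $\mathbf{Y}^t$ in $\mathcal{T}_t$, gives $(\mathbf{X}^t \probind \mathbf{Y}^t \mid \mathbf{Z}^t)$ in $B_t$, hence in $\mathcal{B}$, as desired.

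The only point requiring genuine care, rather than a direct appeal to an earlier result, is the identification in the second paragraph of the DBN's finite-horizon joint distribution with that of the BN $B_t$ over the unfolded template $\mathcal{T}_t$ — specifically, matching parent sets and CPDs correctly across the ``seam'' between consecutive time slices; everything else is bookkeeping plus the invocation of Theorem~\ref{thm:BN-equivalence}.
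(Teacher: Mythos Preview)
Your proposal is correct and follows essentially the same route as the paper: unfold the DBN-template to the finite BN-template $\mathcal{T}^{0:t}$, observe that the hypothesis is exactly d-separation there, view the given DBN's time-$t$ unfolding as a BN in $\Fam(\mathcal{T}^{0:t})$, and invoke the soundness direction of Theorem~\ref{thm:BN-equivalence}. The paper's proof is terser and leaves the ``joint distributions coincide'' step implicit, whereas you spell it out; one small remark is that your aside citing Lemma~\ref{lemma:unfolding} is not quite on point (that lemma concerns the Markov-chain correspondence, not the identification of finite-horizon joints), so you should simply rely on your chain-rule argument.
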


\begin{proof}
Suppose that $(\mathbf{X}^{t} \ind \mathbf{Y}^{t} \mid \mathbf{Z}^{t})$.  Let $B \in \Fam(\mathcal{T})$ be an arbitrary DBN, and let $B^{0:t}$ denote its unfolding into a BN over time slices $0$ to $t$. 
Note that $(\mathbf{X}^t\ind \mathbf{Y}^t \mid \mathbf{Z}^t)$ holds iff  there is no d-path wrt $\mathbf{Z}^t$ from $\mathbf{X}^t$ to $\mathbf{Y}^t$ in the BN-template $\mathcal{T}^{0:t}$.
By Thm.~\ref{thm:BN-equivalence}, it follows that  $(\mathbf{X}^t \probind \mathbf{Y}^t \mid \mathbf{Z}^t)$.	
\end{proof}

The converse direction, i.e., the completeness of d-separation for DBNs, however, turns out to be intricate. We discuss this issue in Sec.~\ref{section:discussion}.
 
 \paragraph{Trace of a DBN(-template).}
For any (finite) set $A$ of such structural CI propositions, a DBN-template defines a \emph{trace} $\tau \in (2^A)^\omega$, i.e., an infinite word over the alphabet $2^A$, where the letter at position $t$ indicates which propositions hold at time $t$. 
Likewise, a full DBN defines a trace $\tau \in (2^B)^\omega$ for any (finite) set $B$ of stochastic CI propositions.

We seek to verify whether the trace $\tau$ of a DBN-template or of a DBN satisfies a logical specification. We use the notation $\tau(t)$, to refer to the $t$-th position of $\tau$, and the notation $\tau[t: \infty]$ to refer to the suffix of $\tau$ starting at position $t$, e.g., $\tau[0:\infty] = \tau$, $\tau[t: \infty][t': \infty] = \tau[t+t': \infty]$.

\paragraph{Linear temporal logic (LTL).}
We consider two common logical formalisms (for a comprehensive exposition of which the reader is referred to \cite{baierkatoen}). The first is linear temporal logic (LTL, introduced in \cite{pnueliLTL}), whose formulae $\varphi$ over a set of atomic propositions $A$ are syntactically given by the grammar $\varphi := a \mid \neg \varphi \mid \varphi \land \varphi \mid \LTLnext \varphi \mid \varphi \LTLuntil \varphi$ where $a\in A$ is an atomic proposition. 
The operator $\LTLnext$ is called ``next'' and the operator $\LTLuntil$ is called ``until''.
Semantically, it is defined recursively whether an infinite word $\tau$ over $2^A$ satisfies an LTL formula (written $\tau \models \varphi$) as follows:
\begin{itemize}
\item $\tau \models a$ if and only if $a \in \tau(0)$,
\item $\tau \models \neg \varphi$ if and only if $\tau \not\models \varphi$,
\item $\tau \models \varphi_1 \land \varphi_2$ if and only if both $\tau \models \varphi_1$ and $\tau \models \varphi_2$,
\item $\tau \models \LTLnext \varphi$ if and only if $\tau[1: \infty] \models \varphi$,
\item $\tau \models \varphi_1 \LTLuntil \varphi_2$ if and only if there exists $t$ s.t. $\tau[t:\infty] \models \varphi_2$ and for all $t' < t$, $\tau[t':\infty] \models \varphi_1$.
\end{itemize}

For notational convenience, we allow access to all the usual Boolean connectives, true, false, as well as the temporal modalities $\LTLeventually$ (``eventually''; $\LTLeventually \varphi$ is equivalent to $\text{true } \LTLuntil \varphi$) and its dual $\LTLglobally$ (``globally''; $\LTLglobally \varphi$ is equivalent to $\neg \LTLeventually \neg \varphi$).

E.g., consider the DBN-template given in Example \ref{ex:intro}. 
We can use d-separation to argue that the structural formula $\LTLglobally (O \ind S \mid L)$ holds, i.e., $(O^t \ind S^t \mid L^t)$ holds for all $t$. 
Using the temporal LTL-operators, also more involved properties can be expressed:  the formula $(\mathbf{X}\ind \mathbf{Y}) \LTLuntil \neg(\mathbf{Y} \ind \mathbf{Z})$, e.g., expresses that 
the sets of variables $\mathbf{X}$ and $\mathbf{Y}$ are structurally independent at least until $\mathbf{Y}$ and $\mathbf{Z}$ are  dependent.

For LTL, we investigate the following two problems:

\begin{itemize}
\item
\textbf{Structural LTL model-checking of DBN-templates:} 
For a DBN-template $\mathcal{T}$ with variables $\mathbf{V}$ and an LTL formula $\varphi$ over the set $A$ of structural CI propositions using $\mathbf{V}$,
decide whether the trace $\tau\in (2^A)^\omega$ of $\mathcal{T}$ satisfies $\varphi$.

\item
\textbf{Stochastic LTL model-checking of DBNs:} 
For a DBN  $\mathcal{B}$ with variables $\mathbf{V}$ and an LTL formula $\varphi$ over the set $B$ of stochastic CI propositions using $\mathbf{V}$,
decide whether the trace $\tau\in (2^B)^\omega$ of $\mathcal{B}$ satisfies $\varphi$.
\end{itemize}

We indeed employ d-separation as a tool to reason more generally about specifications involving only structural independence propositions in Sec.~\ref{section:structural}. On the other hand, we prove that evaluating stochastic formulae as simple as $\LTLeventually (X \probind Y)$ can be number-theoretically hard (Lem.~\ref{lem-skolem-hardness}) showing that a decidability result for stochastic LTL or NBA model-checking of DBNs is  out of reach without a breakthrough in number theory.

\paragraph{Non-deterministic Büchi automata (NBAs).}
The second formalism we  consider is that of nondeterministic Büchi automata (NBAs, introduced in \cite{buchi}), which express precisely the class of $\omega$-regular temporal properties. An NBA is a tuple
$\cA = (Q,\Sigma,\Delta, Q_0, F)$ where
$Q$ is a finite set of states, $\Sigma$ is the alphabet, $\Delta\subseteq Q\times \Sigma \times Q$ is the transition relation,
$Q_0\subseteq Q$ is the set of initial states and $F\subseteq Q$ is the set of accepting states.
A run of $\cA$ on an infinite word $\tau = w_0w_1w_2\dots \in \Sigma^\omega$ is a sequence $\rho = q_0q_1\dots$ of states such that
$q_0\in Q_0$ and $(q_i,w_i,q_{i+1})\in \Delta$ for each $i\in \mathbb{N}$. The run $\rho$ is accepting if $q_j\in F$ for infinitely many $j\in \mathbb{N}$. We say that $\cA$ accepts $\tau$ if there exists an accepting run on $\tau$.

We remark that it is known that NBAs are strictly more expressive than LTL formulae, however translating an LTL formula into an equivalent NBA can lead to an exponential increase in size.
The resulting NBA model-checking problems we consider are:
\begin{itemize}
\item
\textbf{Structural NBA model-checking of DBN-templates:} 
For a DBN-template $\mathcal{T}$ with variables $\mathbf{V}$ and an NBA $\mathcal{A}$ over the alphabet $2^A$ where $A$ is the set of structural CI propositions using $\mathbf{V}$,
decide whether the trace $\tau\in (2^A)^\omega$ of $\mathcal{T}$ is accepted by  $\mathcal{A}$.
\item 
\textbf{Stochastic NBA model-checking of DBNs:} 
For a DBN $\mathcal{B}$ with variables $\mathbf{V}$ and an NBA $\mathcal{A}$ over the alphabet $2^B$ where $B$ is the set of stochastic CI propositions over $\mathbf{V}$,
decide if the trace $\tau\in (2^B)^\omega$ of $\mathcal{B}$ is accepted by  $\mathcal{A}$.

\end{itemize}

\section{Structural Conditional Independence}
\label{section:structural}

In this section, we  study the properties of the trace $\tau \in (2^A)^\omega$ of a DBN-template with variables $\mathbf{V}$ with respect to a set $A$ of structural CI propositions and show how to check whether the trace satisfies a logical specification. 
The main result we will establish is the following:

 \begin{result}
 The structural LTL and NBA model-checking problems for DBN-templates are in PSPACE and NP-hard as well as coNP-hard.
 
For restricted DBN-templates, the structural LTL and NBA model-checking problems are in PTIME.
 \end{result}

To prove this result, we will show that the trace of a DBN-template with respect to structural CI propositions is ultimately periodic by virtue of being represented as the run of a deterministic transition system with $2^{O(|\mathbf{V}|^2)}$ states. We shall further demonstrate that in this transition system, a state can be represented in $O(|\mathbf{V}|^2)$ space, the successor can be computed in time polynomial in $\mathbf{|V|}$, and the labeling function (i.e., which propositions hold in a given state) can be computed in time polynomial in $|\mathbf{V}|$, $|A|$.

We then rely on a classical argument whose key ingredients can be found in \cite[Proof of Lem.~5.47]{baierkatoen} and \cite[Thm.~3.2]{vardi1986automata}.
\begin{lemma}
Let $TS$ be a deterministic labelled transition system of size $N$, represented such that a state's label and successor can be computed in space polynomial in $\log N$, and let $\tau$ be its trace. Given a LTL formula $\varphi$ of size $M$, we can decide in space polynomial in $M, \log N$ whether $\tau$ satisfies $\varphi$. Given an NBA $\mathcal{A}$ of size $K$, we can decide in space polynomial in $\log K, \log N$ whether $\mathcal{A}$ accepts $\tau$.
\end{lemma}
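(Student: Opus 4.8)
The plan is to reduce both model-checking tasks to a single core question --- non-emptiness of the synchronized product of $TS$ with a B\"uchi automaton --- and to solve that question by an on-the-fly nondeterministic search that never materializes any exponential-size object, invoking Savitch's theorem at the end to obtain a deterministic space bound.

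First I would dispose of the LTL case by reducing it to the NBA case. Using the classical construction of \cite{vardi1986automata} (see also \cite[Ch.~5]{baierkatoen}), from an LTL formula $\varphi$ of size $M$ one obtains a (generalized) B\"uchi automaton $\mathcal{A}_\varphi$ with $2^{O(M)}$ states recognizing exactly the models of $\varphi$. The point is that $\mathcal{A}_\varphi$ admits a succinct, on-the-fly representation: a state is an elementary subset of the closure of $\varphi$, hence describable in $O(M)$ bits; whether a triple $(q,\sigma,q')$ is a transition, whether $q$ is initial, and --- after degeneralizing, which costs only an $O(\log M)$-bit counter --- whether $q$ is accepting, are all decidable in space polynomial in $M$. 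It therefore suffices to prove the NBA statement with the parameter $\log K$ replaced throughout by $M$; applying that to $\mathcal{A}_\varphi$ then yields the LTL statement.

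Next, for the NBA statement, I would form the product $\mathcal{P} = TS \otimes \mathcal{A}$: since $TS$ is deterministic it has a unique run generating $\tau$, so a state of $\mathcal{P}$ is a pair $(s,q)$ of the current $TS$-state and a guessed $\mathcal{A}$-state, with a transition $(s,q)\to(s',q')$ whenever $s'$ is the $TS$-successor of $s$ and $(q,\ell(s),q')\in\Delta$, where $\ell(s)$ is the label of $s$. Then $\mathcal{A}$ accepts $\tau$ if and only if $\mathcal{P}$ contains a ``lasso'': a product state $(s,q)$ with $q\in F$ that is reachable from the (unique) initial product state and reachable from itself along a nonempty path. A product state occupies $O(\log N + \log K)$ bits, and every query needed to explore $\mathcal{P}$ --- the $TS$-successor, the label $\ell(s)$, membership in $\Delta$, membership in $F$ --- is, by hypothesis and by the form of the input, computable in space polynomial in $\log N$ and $\log K$.

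Finally I would solve the lasso problem by the standard nondeterministic procedure: guess the accepting product state $(s,q)$; verify reachability from the initial state by guessing a path step by step, storing only the current product state together with a step counter bounded by $|\mathcal{P}| = NK$ (hence $O(\log N + \log K)$ bits); then, keeping $(s,q)$ in memory, guess a nonempty path from $(s,q)$ back to $(s,q)$ in the same way. This is a nondeterministic computation using space polynomial in $\log N$ and $\log K$, and by Savitch's theorem it can be simulated deterministically with only a quadratic blow-up, still polynomial in $\log N,\log K$; for the LTL case, the identical argument with $\log K$ replaced by $M$ gives space polynomial in $M,\log N$. The one point requiring care --- and the main obstacle --- is that at no stage may we write down $\mathcal{A}_\varphi$, $\mathcal{P}$, or any adjacency list in full: every component must be regenerated from its succinct description on demand within the space budget, and the degeneralization of $\mathcal{A}_\varphi$ together with the counters used in the search must be accounted for. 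These are exactly the bookkeeping details established in \cite[Proof of Lem.~5.47]{baierkatoen} and \cite[Thm.~3.2]{vardi1986automata}, which I would invoke rather than reproduce.
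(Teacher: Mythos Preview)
Your proposal is correct and follows essentially the same approach as the paper: form the synchronized product of $TS$ with the (on-the-fly) B\"uchi automaton, nondeterministically search for a lasso through an accepting state using only a product-state and a counter, and close with Savitch's theorem. The paper handles the LTL case by a direct citation to \cite[Lem.~5.47]{baierkatoen} and \cite[Thm.~3.2]{vardi1986automata}, whereas you spell out the reduction via the $2^{O(M)}$-state automaton $\mathcal{A}_\varphi$ with $O(M)$-bit states; this is exactly what underlies those citations, so there is no real divergence.
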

\begin{proof}[Proof Sketch]
In the case of the LTL formula, the claim holds even if $TS$ is non-deterministic, see, e.g., \cite[Proof of Lem.~5.47]{baierkatoen} and \cite[Thm.~3.2]{vardi1986automata}. In the case of the NBA, we consider the synchronized product of $TS$ with $\mathcal{A}$. The runs of this transition system are all runs of $\mathcal{A}$ on $\tau$, and we use the same idea as that in \cite[Proof of Lem.~5.47]{baierkatoen}, i.e., non-deterministically guess (on-the-fly) a polynomially long prefix of an accepting run, and use only logarithmic space to count the length of the guessed sequence of states, verify that it is locally coherent to indeed be a run, and flag that the loop in the prefix visits an accepting state. Finally, recall that by Savitch's theorem, deterministic and non-deterministic polynomial space coincide.  
\end{proof}

 The improved complexity for restricted DBN-templates follows from the insight that  the trace of a restricted DBN-template on variables $\mathbf{V}$ is constant from time  $|\mathbf{V}|^2$
 onwards.
 
To convey a feeling for the hardness results, we first provide an example of a family of DBN-templates whose traces have periods exponential in the number of used variables: 

\begin{figure}
	\centering
	\scalebox{1}{
\begin{tikzpicture}[scale = 1.7, thick, >=latex]

	\def\xx{0}
	\def\yy{0}
	\def\zz{.5}

	\node[fill=black, circle, inner sep=1.5pt, label={above:$X_{i-1}$}] (a01) at (-.5,2) {};
	\node[fill=black, circle, inner sep=1.5pt, label={above:$W_{i,0}$}] (z0n) at (0,2) {};
	\node[fill=black, circle, inner sep=1.5pt, label={above:$W_{i,1}$}] (z02) at (.5,2) {};
	\node[fill=black, circle, inner sep=1.5pt, label={above:$W_{i,p_i-1}$}] (z01) at (2.5,2) {};

		\node[fill=black, circle, inner sep=1.5pt, label={above:$X_i$}] (a02) at (3,2) {};

	\node[fill=black, circle, inner sep=1.5pt, label={below:$X_{i-1}'$}] (a11) at (-.5,1) {};
	\node[fill=black, circle, inner sep=1.5pt, label={below:$W_{i,0}'$}] (z1n) at (0,1) {};
	\node[fill=black, circle, inner sep=1.5pt, label={below:$W_{i,1}'$}] (z12) at (.5,1) {};
	\node[fill=black, circle, inner sep=1.5pt, label={below:$W_{i,p_i-1}'$}] (z11) at (2.5,1) {};

	\node[fill=black, circle, inner sep=1.5pt, label={below:$X'_i$}] (a12) at (3,1) {};

	\node at (1,2) {$\textbf{\dots}$};
	\node at (1.5,0.9) {$\textbf{\dots}$};

	\draw[->] (z0n) -- (z12);
	\draw[->] (z02) -- (1,1);
	\draw[->] (1.5,2) -- (2,1);
	\draw[->] (2,2) -- (z11);
	\draw[->] (z01) -- (z1n);
	
	\draw[->,dashed] (a01) -- (z0n);

	\draw[ ->,dashed] (a02) to[out=165, in=-35] (z0n);

\end{tikzpicture}
}%scalebox
    
\caption{Bridge between islands $X_{i-1}$ and $X_i$. Initial template in dashed edges.}
     
\label{fig:dbnBig}

\end{figure}
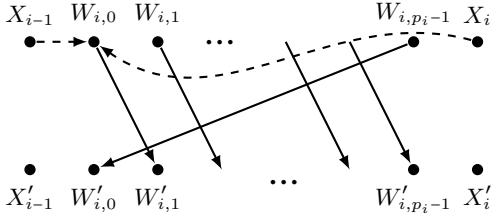

\begin{example}
The $k$-th DBN-template of this family has variables $X_0, X_1, \ldots, X_{k}$ as well as $W_{1,0}, W_{1,1},W_{2,0}, \ldots, W_{k,0}, W_{k,1}, \ldots, W_{k, p_k-1}$, where $p_i$ refers to the $i$-th prime. The initial template has edges of the form $(X_{i-1}^0, W_{i, 0}^0)$ and $(X_{i}^0, W_{i, 0}^0)$ for all $i$, and the step template has edges $(X_0, X_0'), (X_k, X_k')$, and edges from $W_{i, r}$ to $W'_{i, r+1 (\text{mod } p_i)}$. By construction, we have $(X_0 \ind X_k \mid \{W_{1,0}, \ldots W_{k,0}\})$ if and only if the timestep $t$ is divisible by all of $2, 3, \ldots, p_k$ (see Fig.~\ref{fig:dbnBig}).
Intuitively, we go from $X_0$ to $X_k$ via ``islands'' $X_1, \ldots, X_{k-1}$. The ``bridge'' between successive islands $X_{i-1}, X_i$ uses the edges of the initial template, and is open precisely when $t$ is divisible by $p_i$, i.e., a collision can be attributed to $W_{i,0}^t$. We need all bridges to be open simultaneously to make the journey: this gives us a period $2 \cdot 3 \cdot \cdots \cdot p_k$, which is exponential in $|\mathbf{V}| = (k+1)+2+3+\cdots+p_k$.
\end{example}

We establish NP-hardness of deciding whether $\LTLeventually (X \ind Y \mid \mathbf{Z})$ holds: we reduce from the NP-complete intersection problem for unary DFA \cite{blondiNP}. Since we also have access to the negated formula, we can also reduce from the complementary problem. Further, both properties can be expressed by fixed NBAs, and we get the following result whose proof is an adaptation of the example above.
\begin{lemma}
\label{np-conp-hard}
The LTL and NBA model-checking problems with structural-independence propositions are hard for NP as well as for coNP.
\end{lemma}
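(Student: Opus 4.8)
The plan is to reduce from the \emph{intersection non-emptiness problem for unary DFA} --- given DFAs $D_1,\dots,D_n$ over a one-letter alphabet $\{a\}$, decide whether $\bigcap_{i=1}^{n} L(D_i)\neq\emptyset$ --- which is NP-complete \cite{blondiNP}. I would build, in polynomial time, a DBN-template $\mathcal{T}$ together with a single structural CI proposition $(X\ind Y\mid \mathbf{Z})$ such that the trace of $\mathcal{T}$ over the singleton set $\{(X\ind Y\mid \mathbf{Z})\}$ satisfies $\LTLeventually(X\ind Y\mid \mathbf{Z})$ if and only if $\bigcap_i L(D_i)\neq\emptyset$. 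Since LTL also lets us negate, the very same template shows that checking $\LTLglobally\neg(X\ind Y\mid \mathbf{Z})$ encodes the complementary emptiness problem, which is coNP-complete; hence the model-checking problem is hard for both NP and coNP. Finally, for a single atomic proposition $p$ the languages of $\LTLeventually p$ and of $\LTLglobally\neg p$ are each recognized by a fixed two-state (deterministic) B\"uchi automaton over $\{\emptyset,\{p\}\}$, so both hardness results carry over verbatim to NBA model-checking.

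For the template I would put each unary $D_i$ into ``lasso'' form: its reachable states listed as $r^{(i)}_0\to r^{(i)}_1\to\cdots\to r^{(i)}_{s_i+c_i-1}$ with a back-edge from $r^{(i)}_{s_i+c_i-1}$ to $r^{(i)}_{s_i}$, initial state $r^{(i)}_0$, and accepting set $F_i$; reading $a^t$ then leaves $D_i$ in the state of index $\iota_i(t)$, where $\iota_i(t)=t$ for $t<s_i+c_i$ and $\iota_i(t)=s_i+((t-s_i)\bmod c_i)$ otherwise. The template's variables are $X$, $Y$ and $W_{i,j}$ ($1\le i\le n$, $0\le j<s_i+c_i$). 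Its step template has the self-loops $(X,X')$, $(Y,Y')$ and, for each $i$, the ``marker'' edges $(W_{i,j},W'_{i,j+1})$ for $j<s_i+c_i-1$ and $(W_{i,s_i+c_i-1},W'_{i,s_i})$; its initial template has, for each $i$, the edges $(X^0,W^{0}_{i,0})$ and $(Y^0,W^{0}_{i,0})$, turning $W^{0}_{i,0}$ into a collider on any path that enters it via both $X^0$ and $Y^0$. I would set $\mathbf{Z}=\{W_{i,j}:r^{(i)}_j\notin F_i\}$. This is the gadget of the example above, with the prime cycles replaced by the lassos of the $D_i$ and the ``bridges'' placed in parallel instead of in series: a token born at $W^{0}_{i,0}$ walks along the marker edges, so at time $t$ it sits at $W^{t}_{i,\iota_i(t)}$, and therefore $W^{0}_{i,0}$ has a descendant in $\mathbf{Z}^t$ exactly when $D_i$ rejects $a^t$. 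The reduction is clearly polynomial, since lasso normal forms are polynomial-time computable.

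The correctness argument would rest on a structural analysis of d-paths in the unfolding up to time $t$: any path from $X^t$ to $Y^t$ must first descend the $X$-self-loop chain to $X^0$, then cross exactly one bridge $X^0\to W^{0}_{i,0}\leftarrow Y^0$ (stepping into a $W_i$-marker chain is a dead end, and a vertex-simple path revisits no hub node), then climb the $Y$-self-loop chain to $Y^t$; every $X$- and $Y$-copy on such a path is a non-collider lying outside $\mathbf{Z}$ and so never blocks, while the single collider $W^{0}_{i,0}$ is unblocked iff it has a descendant in $\mathbf{Z}^t$, i.e.\ iff $D_i$ rejects $a^t$. Hence a d-path from $X^t$ to $Y^t$ exists iff some $D_i$ rejects $a^t$; equivalently $(X^t\ind Y^t\mid \mathbf{Z}^t)$ holds iff $a^t\in\bigcap_i L(D_i)$, and quantifying over $t$ yields the two equivalences stated above. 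I expect this d-path analysis --- ruling out spurious paths and handling the corner cases (small $t$, and $t=0$, where the relevant descendant of $W^{0}_{i,0}$ is the collider itself) --- to be the delicate part, together with the routine induction that the marker gadget tracks $\iota_i(t)$ faithfully across the tail-to-cycle transition.
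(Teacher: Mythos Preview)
Your proof is correct and uses the same source problem and core gadget as the paper: encode each unary DFA's run as a marker chain descending from an initial-template collider $W^0_{i,0}$, so that the bridge through $W^0_{i,0}$ is open at time $t$ exactly when the run's time-$t$ state lies in the conditioning set. The two constructions are dual. The paper wires the bridges \emph{in series} via islands $X_0,\dots,X_k$ and conditions on the \emph{accepting} states $\mathbf{F}$, so a d-path from $X_0^t$ to $X_k^t$ exists iff \emph{every} automaton accepts $a^t$; you wire them \emph{in parallel} between a single $X,Y$ pair and condition on the \emph{non-accepting} states, so a d-path exists iff \emph{some} automaton rejects $a^t$. Consequently your reduction targets $\LTLeventually(X\ind Y\mid\mathbf{Z})$ directly, whereas the paper's construction targets $\LTLeventually\neg(X_0\ind X_k\mid\mathbf{F})$; either way both NP- and coNP-hardness follow from closure under negation, and the transfer to fixed two-state NBAs is as you state. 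Your lasso normalisation and dead-end analysis of the marker chains are sound; the only wrinkle you do not mention explicitly is that the cycle-entry node $W^s_{i,s_i}$ acquires a second parent $W^{s-1}_{i,s_i+c_i-1}$ in the unfolding, but that vertex is stray (disconnected from the $X$- and $Y$-chains), so no spurious d-paths arise.
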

\begin{proof}
Assume we are given unary DFAs $\mathcal{A}_1, \ldots, \mathcal{A}_k$. Recall that each $\mathcal{A}_i$ is given by a set $\mathbf{Q}_i = \{Q_{i,0}, \ldots, Q_{i,m_i}\}$ of states, unary alphabet $\{a\}$, the initial state $Q_{i,0}$, transition function $\delta_i: \mathbf{Q}_i \rightarrow \mathbf{Q}_i$, and set of accepting states $\mathbf{F}_i \subseteq \mathbf{Q}_i$. The intersection problem asks whether there exists $t \ge 0$ such that $a^t$ is accepted by all the given automata. We encode this as a DBN with variables $\{X_0, \ldots, X_k\} \cup \mathbf{Q}_1 \cdots \cup \mathbf{Q}_k$. Let $\mathbf{F} = \mathbf{F}_1 \cup \cdots \cup \mathbf{F}_k$. In the initial template, we build bridges $(X_{i-1}^0, Q_{i,0}^0), (X_i^0, Q_{i,0}^0)$. In the step template, we connect $(X_0, X_0'), (X_k, X_k')$, and $(Q_{i,j}, \delta_i(Q_{i, j})')$. 

We then observe that $a^t$ is accepted by all automata if and only if $(X_0 \ind X_k \mid \mathbf{F})$ holds at time $t$. To see this, note that a d-path from $X_0^t$ to $X_k^t$ first has 
to go up to $X_0^0$.  Then, it has to move to $Q_{1,0}^0$ from there, it can only continue to $X_1^0$ if $Q_{1,0}^0$ (or one of its decendants) can serve as a collider.
This is only possible if an element of $\mathbf{F}^t$ is reachable from $Q_{1,0}^0$, which is the case if and only if the word $a^t$ leads to an accepting state in $\mathcal{A}_1$.
Analogously, $a^t$ also has to be accepted by all other given unary DFAs. 
\end{proof}

\subsection{DBN traces through transition systems}
We shall now prove the PSPACE upper bounds for the structural model-checking problems by showing that any trace of a DBN-template with variables $\mathbf{V}$ and structural independence propositions $A$ can be obtained as the run of a deterministic transition system with $2^{O(|\mathbf{V}|^2)}$ states, each of whose states can be represented in $O(|\mathbf{V}|^2)$ space, whose successor function can be computed in time polynomial in $|\mathbf{V}|$, and whose labelling function can be computed in time polynomial in $|\mathbf{V}|, |A|$.

We start by making a key observation about d-paths: 

\begin{lemma}
If there exists a d-path from $X$ to $Y$ with respect to a set $\mathbf{Z}$ of size $k$  in an arbitrary BN, there is one with at most $k$ collisions, all of which occur inside $\mathbf{Z}$. 
\end{lemma}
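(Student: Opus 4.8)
The plan is to produce the desired d-path from an arbitrary one by a single extremal choice and then verify it has both properties. Since by hypothesis the set of d-paths from $X$ to $Y$ with respect to $\mathbf{Z}$ is nonempty, I would fix one, $\pi = W_0, \dots, W_n$, that is lexicographically minimal for the pair consisting of (i) the number of colliders of $\pi$ that do not lie in $\mathbf{Z}$, and (ii) the length $n$ of $\pi$; such a minimum exists because both quantities range over $\mathbb{N}$. The claim is then that $\pi$ itself has at most $k=|\mathbf{Z}|$ colliders, all of which lie in $\mathbf{Z}$.

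\emph{Step 1: every collider of $\pi$ lies in $\mathbf{Z}$.} Suppose some interior $W_i$ is a collider with $W_i\notin\mathbf{Z}$. By the definition of collider, $W_i$ has a descendant in $\mathbf{Z}$; choose one, $D$, at minimal directed distance, witnessed by a directed path $W_i = u_0 \to u_1 \to \cdots \to u_s = D$ with $s\ge 1$ (as $W_i\ne D$) and $u_0,\dots,u_{s-1}\notin\mathbf{Z}$. Replace the passage through $W_i$ by the detour
\[
  W_{i-1},\ u_0, u_1, \dots, u_{s-1}, u_s, u_{s-1}, \dots, u_1, u_0,\ W_{i+1},
\]
which walks down to $D=u_s$ along the directed path and then back up. A check of the local d-path conditions shows this is again a valid d-path with the same endpoints: each copy of $u_l$ for $l<s$ has one incident path-edge pointing in and one pointing out (a chain node) and lies outside $\mathbf{Z}$; the unique occurrence of $u_s=D$ has both path-neighbours equal to $u_{s-1}$, which sends the edge $u_{s-1}\to D$ into it, so it is a collider, and the collision is legitimately attributed since $D\in\mathbf{Z}$ is its own descendant; and $W_{i-1}, W_{i+1}$ keep their roles. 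This detour deletes one collider outside $\mathbf{Z}$ and creates exactly one collider inside $\mathbf{Z}$, strictly decreasing the first coordinate — contradiction. Hence that coordinate is $0$, i.e.\ all colliders of $\pi$ lie in $\mathbf{Z}$.

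\emph{Step 2: at most $k$ colliders.} In any d-path an interior non-collider may not lie in $\mathbf{Z}$, while $W_0=X$ and $W_n=Y$ lie in sets disjoint from $\mathbf{Z}$; hence every occurrence of a node of $\mathbf{Z}$ on $\pi$ is an interior collider. I claim no node of $\mathbf{Z}$ occurs twice on $\pi$: if $z\in\mathbf{Z}$ occurred at positions $i<j$, both occurrences are colliders, so $W_{i-1}\to z$ and $W_{j+1}\to z$, and the spliced walk $W_0,\dots,W_{i-1}, z, W_{j+1},\dots,W_n$ still has $z$ as a collider at the splice point (both incident path-edges point in) and is unchanged everywhere else, so it is a valid d-path with all colliders in $\mathbf{Z}$; since the graph is acyclic the two occurrences of $z$ cannot be adjacent (that would be a self-loop), so $j\ge i+2$ and the new walk is strictly shorter, contradicting minimality of $n$ while leaving the first coordinate at $0$. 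Therefore the colliders of $\pi$ are pairwise-distinct elements of $\mathbf{Z}$, so there are at most $|\mathbf{Z}|=k$ of them, all inside $\mathbf{Z}$, as required.

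The only delicate point is the verification in Step 1 that the ``bounce'' detour is admissible under the paper's definition of a d-path — in particular that a vertex (here $D$) may be entered and left along the same edge and still count as a collider with a correctly attributed descendant, and that the repeated vertices $u_0,\dots,u_{s-1}$ genuinely behave as chain nodes. Everything else — the splicing argument of Step 2 and the final count — is routine. (One may note in passing that two adjacent nodes of a d-path can never both be colliders, since that would force a $2$-cycle, so detours at distinct colliders do not interfere; but as the definition permits walks rather than simple paths, this observation is not even needed.)
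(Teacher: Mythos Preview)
Your proof is correct and, while it shares the key device of ``descending into $\mathbf{Z}$'' with the paper, it is organised differently. The paper takes a d-path with the fewest collisions; if there are more than $k$, pigeonhole yields two colliders $W_1,W_2$ sharing an attributed $Z\in\mathbf{Z}$, and one replaces the segment from $W_1$ to $W_2$ by the single-collision tunnel $W_1\to\cdots\to Z\leftarrow\cdots\leftarrow W_2$, contradicting minimality. You instead lexicographically minimise (number of colliders outside $\mathbf{Z}$, length): Step~1 uses a down-and-back \emph{bounce} at a single external collider to convert it into one inside $\mathbf{Z}$ (trading length for a drop in the first coordinate), and Step~2 splices out repeated $\mathbf{Z}$-nodes to force the colliders to be pairwise distinct elements of $\mathbf{Z}$. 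The local checks you flag as delicate (that the bounce node $D$ is a legitimate collider with both path-neighbours equal to $u_{s-1}$, and that the $u_l$ behave as chain nodes outside $\mathbf{Z}$) all go through under the paper's walk-based definition of a d-path. What your route buys is that both conclusions of the lemma are established explicitly---the paper's terse argument literally addresses only the collision \emph{count} and leaves ``all inside $\mathbf{Z}$'' implicit (it requires exactly your bounce to finish). What the paper's route buys is brevity and a single extremal invariant rather than a two-coordinate one.
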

\begin{proof}
Suppose, for the sake of contradiction, a d-path with the fewest collisions has more than $k$ collisions. Then, by the pigeonhole principle there exists $Z \in \mathbf{Z}$ to which more than one collision is attributed. Let the first and last of these collisions occur at $W_1, W_2$ respectively. Since they have $Z$ as a common descendant, there exists a path from $W_1$ to $W_2$ with a single collision inside $\mathbf{Z}$. We use this path to ``tunnel through'' and replace the given d-path with one that has fewer collisions: a contradiction, as desired.
\end{proof}

The above claim motivates us to compute the pairs in $\{X^t, Y^t, Z_1^t, \ldots, Z_k^t\}$ that are connected by collision-free paths in order to determine whether there is a d-path relative to $\mathbf{Z}^t$ from $X^t$ to $Y^t$. However, we must be slightly careful: a path that concatenates collision-free paths via edges $(Z^t, X^t)$ and $(Z^t, Y^t)$ is actually blocked by $Z^t$.

\paragraph{Construction of the transition system.}

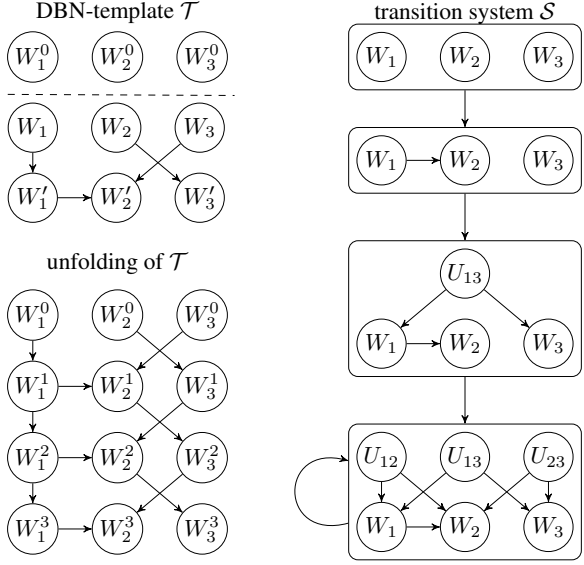
\begin{figure}[t]
    \centering

        \resizebox{.45\textwidth}{!}{
        \begin{tikzpicture}[
    % Define styles for nodes
    latent/.style={circle, draw=black, fill=white, inner sep=1pt, minimum size=2.1em},
    % Positioning setup
    node distance=.3cm and .5cm, % Vertical and horizontal distance (adjusted for horizontal layout)
    >=stealth' % Arrow style
]

    % --- Nodes for time slice t ---
    \node[latent] (W_t) {$W_1$};
    \node[latent, right=of W_t] (RT_t) {$W_2$}; % Changed to right
    \node[latent, right=of RT_t] (Th_t) {$W_3$}; % Changed to right
  %  \node[latent, right=of Th_t] (H_t) {$H$};    % Changed to right

    \node[above=of W_t,xshift=-.5cm,yshift=-0.3cm] (anchor1) {};
    \node[above=of H_t,xshift=-1.4cm,,yshift=-0.3cm] (anchor2) {};
    
    \draw [dashed] (anchor1) -- (anchor2);

     % --- Nodes for time slice 0 ---
    \node[latent, above=of W_t] (W_0) {$W_1^0$};
    \node[latent, right=of W_0] (RT_0) {$W_2^0$}; % Changed to right
    \node[latent, right=of RT_0] (Th_0) {$W_3^0$}; % Changed to right
 
 \node[above=of RT_0, yshift=-.3cm] {DBN-template $\mathcal{T}$};

    % --- Nodes for time slice t+1 ---
    % Position the first node of t+1 relative to the first node of t
    \node[latent, below=of W_t] (W_t1) {$W_1'$}; % Adjusted position for next slice
    \node[latent, right=of W_t1] (RT_t1) {$W_2'$};
    \node[latent, right=of RT_t1] (Th_t1) {$W_3'$};
  %  \node[latent, right=of Th_t1] (H_t1) {$H'$};

    % --- Temporal Edges (from t to t+1) ---
    \edge {W_t} {W_t1};
   % \edge {RT_t} {W_t1};
       % \edge {W_t1} {RT_t1};
          \edge {W_t1} {RT_t1};
           \edge {Th_t} {RT_t1};
      \edge {RT_t} {Th_t1};
   %   \edge {H_t1} {Th_t1};

%%%%%%%%%%%% Unfolding

 \node[latent, below=of W_t1, yshift=-.7cm] (a10) {$W_1^0$};
    \node[latent, right=of a10] (a20) {$W_2^0$}; % Changed to right
    \node[latent, right=of a20] (a30) {$W_3^0$};
    
     \node[latent, below=of a10] (a11) {$W_1^1$};
    \node[latent, right=of a11] (a21) {$W_2^1$}; % Changed to right
    \node[latent, right=of a21] (a31) {$W_3^1$};
    
    \node[latent, below=of a11] (a12) {$W_1^2$};
    \node[latent, right=of a12] (a22) {$W_2^2$}; % Changed to right
    \node[latent, right=of a22] (a32) {$W_3^2$};
    
    \node[latent, below=of a12] (a13) {$W_1^3$};
    \node[latent, right=of a13] (a23) {$W_2^3$}; % Changed to right
    \node[latent, right=of a23] (a33) {$W_3^3$};

  \edge {a10} {a11};
   \edge {a11} {a12};
    \edge {a12} {a13};

\edge {a11} {a21};
\edge {a12} {a22};
\edge {a13} {a23};

\edge {a20} {a31};
\edge {a21} {a32};
\edge {a22} {a33};

\edge {a30} {a21};
\edge {a31} {a22};
\edge {a32} {a23};
        
\node[above=of a20, yshift=-.2cm]  {unfolding of $\mathcal{T}$};

%%%%%%%%%b Transition system

 \node[latent, right=of RT_0, xshift=2.7cm] (b10) {$W_1$};
    \node[latent, right=of b10] (b20) {$W_2$}; % Changed to right
    \node[latent, right=of b20] (b30) {$W_3$};
    
     \node[latent, below=of b10,yshift=-.5cm] (b11) {$W_1$};
    \node[latent, right=of b11] (b21) {$W_2$}; % Changed to right
    \node[latent, right=of b21] (b31) {$W_3$};
    
        \edge {b11} {b21};

     \node[latent, below=of b11,yshift=-1.7cm] (b12) {$W_1$};
    \node[latent, right=of b12] (b22) {$W_2$}; % Changed to right
    \node[latent, right=of b22] (b32) {$W_3$};
     \node[latent, above=of b22] (u13) {$U_{13}$};

 \edge {b12} {b22};
  \edge {u13} {b12};
    \edge {u13} {b32};

      \node[latent, below=of b12,yshift=-1.7cm] (b13) {$W_1$};
    \node[latent, right=of b13] (b23) {$W_2$}; % Changed to right
    \node[latent, right=of b23] (b33) {$W_3$};
     \node[latent, above=of b23] (u13a) {$U_{13}$};
     \node[latent, above=of b13] (u12a) {$U_{12}$};
     \node[latent, above=of b33] (u23a) {$U_{23}$};

 \edge {b13} {b23};
  \edge {u13a} {b13};
    \edge {u13a} {b33};
      \edge {u12a} {b23};
    \edge {u12a} {b13};
    \edge{u23a}{b23};
    \edge{u23a}{b33};

    \node[draw, rectangle, rounded corners, fit=(b10) (b30) ] (q0) {};
    \node[draw, rectangle, rounded corners, fit=(b11) (b31) ] (q1) {};
     \node[draw, rectangle, rounded corners, fit=(b12) (b32) (u13) ] (q2) {};
          \node[draw, rectangle, rounded corners, fit=(b13) (b33) (u13a) ] (q3) {};

    \edge {q0} {q1};
        \edge {q1} {q2};
                \edge {q2} {q3};
                      \path (q3) edge [->,loop left,looseness=3] (q3);

 \node[above=of b20, yshift=-.3cm] {transition system $\mathcal{S}$};
   
\end{tikzpicture}
}

    \caption{An example of a DBN-template $\mathcal{T}$ with its unfolding and the transition system $\mathcal{S}$ constructed from $\mathcal{T}$. Variables $U_{ij}$ without outgoing edges are omitted. Note, e.g., that the variable $U_{13}$ is connected to $W_1$ and $W_2$ after two time steps reflecting that there is a collision-free d-path $W_1^2,W_1^1,W_2^1,W_3^2$ connecting $W_1^2$ and $W_3^2$ in the unfolding through the previous time slices. }
         
    \label{fig:illustration_construction}
\end{figure}

We  construct a transition system $\mathcal{S}=(Q,q_0,\to,A,L)$, where $Q$ will be the state space with an initial state $q_0$, $\to$ a deterministic successor relation, $A$ the set of structural CI propositions, and $L\colon Q\to 2^A$ the labeling function.
The states in $S$ will be 
 BN-templates that represent the connections via collision-free d-paths in the DBN-template at some time $t$. 
 For an illustration of all steps of the construction, see Fig.~\ref{fig:illustration_construction}.

\emph{State space and labeling function:}
The representative BN-template for time $t = 0$ is simply the initial BN-template. The representative BN-templates for times $t > 0$ use variables $\mathbf{V} = \{W_1, \ldots, W_n\}$, and auxiliary variables $\mathbf{U} = \{U_{ij}: 1 \le i, j \le n\}$, corresponding to unordered pairs of distinct $i, j$. The edge $(W_i, W_j)$ is always present if and only if $(W_i', W_j')$ is an edge in the step-template BN. 
In the representative at time $t$,
we additionally draw edges $(U_{ij}, W_i)$ and $(U_{ij}, W_j)$ if there is a collision-free d-path from $W_i^t$ to $W_j^t$ in the original DBN-template, and the intermediate vertices along this path do not belong to $\mathbf{V}^t$. Note how the latter requirement rules out the possibility of a path being blocked by an observed variable. 
So, by definition, $(\mathbf{X} \ind \mathbf{Y} \mid \mathbf{Z})$ holds at time $t$ in the DBN if and only if $(\mathbf{X}^t \ind \mathbf{Y}^t \mid \mathbf{Z}^t)$ holds in the representative BN-template; running d-separation queries on the latter settles the issue of the labelling function, which hence can be computed in time polynomial in $|\mathbf{V}|$ and $A$.

We observe that for $t > 0$, there are $2^{\binom{|\mathbf{V}|}{2}}$ possible representatives, depending on the choice of which $U_{ij}$ are made parents, each representable as a graph with $O(|\mathbf{V}|^2)$ vertices. This establishes the size requirements described at the beginning of Sec.~\ref{section:structural}.

\emph{Transition relation:}
Above, the transition relation of the reachable part of the state space is implicitly given by following the time steps.
Now, we describe how to compute the successor of a representative BN-template  $\mathcal{B}$ on $\mathbf{V}\cup\mathbf{U}$ directly in polynomial time.
For this, consider the graph with vertices $\mathbf{U} \cup \mathbf{V} \cup \mathbf{V}'\cup\mathbf{U}'$. Draw edges in the subgraph induced by $\mathbf{U} \cup \mathbf{V}$ as prescribed by  $\mathcal{B}$, and edges in $(\mathbf{V} \cup \mathbf{V}') \times (\mathbf{V}')$ as prescribed by the step template. 
Now, for each $i\not=j$, we add edges from $U_{ij}'$ to $W_i'$ and $W_j'$ if there is an $A\in \mathbf{U} \cup \mathbf{V} $ that can reach $W_i'$ and $W_j'$ in the graph constructed so far without using edges inside $\mathbf{V}'$. 

The correctness of this construction can be seen as follows:
If $A$ is in $\mathbf{V}$, this is a collision-free d-path from $W_i'$ to $W_j'$. If $A\in \mathbf{U}$ it has edges to some $B_i\in\mathbf{V}$ reaching $W_i'$ and $B_j\in \mathbf{V}$ reaching $W_j'$. The edges to $B_i$ and $B_j$ 
represent a collision-free d-path which can then be extended to a collision-free d-path from $W_i'$ to $W_j'$.
Finally, we restrict the graph to $\mathbf{V}'\cup\mathbf{U}'$ to obtain the successor of $\mathcal{B}$.
Clearly, this successor can be computed in time polynomial in $|\mathbf{V}|$.

\emph{PSPACE upper bound:}
So, we have constructed the deterministic transition system satisfying the requirements described at the beginning of Sec.~\ref{section:structural} and conclude:

\begin{theorem}
\label{theorem-pspace}
The structural LTL and NBA model-checking problems for DBN-templates are in PSPACE.
\end{theorem}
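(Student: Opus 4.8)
The plan is to obtain the result directly from the preceding lemma on model checking deterministic labelled transition systems, instantiated with the transition system $\mathcal{S} = (Q, q_0, \to, A, L)$ built above. First I would collect the facts that construction delivers: $\mathcal{S}$ is deterministic; it has $N = |Q| = 2^{O(|\mathbf{V}|^2)}$ states; each state is a graph on $O(|\mathbf{V}|^2)$ vertices and so is stored in $O(|\mathbf{V}|^2)$ bits; the successor of a state is computable in time, hence space, polynomial in $|\mathbf{V}|$; and the label of a state is computable in time polynomial in $|\mathbf{V}|$ and $|A|$ by running d-separation queries on the corresponding representative BN-template. I would also record that, by the correctness of the transition relation and the labeling function established in the construction, the state reached from $q_0$ after $t$ steps is the representative BN-template for time $t$, and its label is exactly the set of propositions of $A$ that hold at time $t$; consequently the trace of $\mathcal{S}$ from $q_0$ is precisely the trace $\tau \in (2^A)^\omega$ of the DBN-template.

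Next I would verify that the hypotheses of the lemma are met with the right parameters. Here $\log N = O(|\mathbf{V}|^2)$ is polynomial in the size of the input DBN-template; a state fits in $O(|\mathbf{V}|^2)$ bits; the successor is computable in space polynomial in $|\mathbf{V}| = O(\sqrt{\log N})$; and the label is computable in space polynomial in $|\mathbf{V}|$ and $|A|$, with $|A|$ bounded by the input length. Applying the lemma to $\mathcal{S}$ together with an LTL formula $\varphi$ of size $M$ then gives a decision procedure running in space polynomial in $M$, $\log N$ and $|A|$, hence polynomial in the total input size, which settles the structural LTL model-checking problem. Applying the lemma to $\mathcal{S}$ together with an NBA $\mathcal{A}$ of size $K$ gives a decision procedure running in space polynomial in $\log K$, $\log N$ and $|A|$, again polynomial in the input, which settles the structural NBA model-checking problem. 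Since the lemma's procedure (deterministic by Savitch's theorem) uses polynomial space in both cases, both problems lie in PSPACE.

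I do not anticipate a genuine obstacle: all the real work sits in the transition-system construction and in the generic model-checking lemma, both of which precede this theorem. The only point requiring a little care is bookkeeping, namely rephrasing the ``polynomial in $|\mathbf{V}|$ and $|A|$'' bounds for computing successors and labels into the ``polynomial in $\log N$'' form that the lemma literally demands, using $|\mathbf{V}| = O(\sqrt{\log N})$ and noting that the extra dependence on $|A|$ is harmless because $|A|$ is part of the input.
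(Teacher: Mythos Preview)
Your proposal is correct and follows exactly the same approach as the paper: the paper concludes the theorem immediately from the preceding transition-system construction and the generic model-checking lemma, with essentially no additional argument beyond ``we have constructed the deterministic transition system satisfying the requirements described at the beginning of Sec.~4 and conclude.'' Your write-up simply makes the implicit bookkeeping (matching the $|\mathbf{V}|,|A|$ bounds to the $\log N$ hypothesis of the lemma) explicit, which the paper leaves to the reader.
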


\subsection{The special case of restricted DBNs}
The difficulty of exponentially long periods is circumvented when we consider restricted DBNs: in this case, the trace is constant from position $t = |\mathbf{V}|^2$ onwards. 
To show this, we argue that for restricted DBNs, all times $t > |\mathbf{V}|^2$ have the same representative. Since going all the way to $\mathbf{V}^0$ does not give access to any ``new'' connecting edges in this setting. Formally:

\begin{lemma}
In a restricted DBN-template, if there is a collision-free path from $X^t$ to $Y^t$, there is one that goes back at most $|\mathbf{V}|^2$ time slices.
\end{lemma}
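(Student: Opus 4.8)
The plan is to recast the statement as one about common ancestors in the unfolding. A collision-free path from $X^t$ to $Y^t$ is necessarily a \emph{trek}: inspecting edge orientations, forbidding colliders forces the path to consist of a maximal ``toward-parents'' segment followed by a maximal ``toward-children'' segment, so it has the form $X^t \leftarrow \cdots \leftarrow S \rightarrow \cdots \rightarrow Y^t$ for an \emph{apex} node $S$ that is a common ancestor of $X^t$ and $Y^t$; conversely, any common ancestor of $X^t$ and $Y^t$ yields such a path once the two directed segments are pruned so that they meet only at the apex. Since every edge of the unfolding is either intra-slice or runs from time slice $i$ to slice $i{+}1$, directed paths have non-decreasing slice indices, and hence the apex $S$ sits at the \emph{smallest} time slice visited by the path. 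Thus ``goes back at most $|\mathbf{V}|^2$ time slices'' is equivalent to ``the apex lies at a slice $\ge t-|\mathbf{V}|^2$'', and it suffices to prove: if $X^t$ and $Y^t$ have a common ancestor, then they have one at a slice $\ge t-|\mathbf{V}|^2$.

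To prove this, I would take a common ancestor $S$ of $X^t$ and $Y^t$ at the \emph{maximal} possible time slice $t_S$, fix directed paths $P$ from $S$ to $X^t$ and $Q$ from $S$ to $Y^t$, and suppose for contradiction that $t_S < t-|\mathbf{V}|^2$. Both $P$ and $Q$ pass through every slice in $\{t_S,\dots,t\}$, so for each such $s$ I record the pair $(\pi(s),\rho(s))\in\mathbf{V}\times\mathbf{V}$ of variable names of the first vertex of $P$, respectively $Q$, lying in slice $s$. There are more than $|\mathbf{V}|^2$ slices in the interval, so by pigeonhole two of them, $s_1<s_2$, carry the same pair; set $d:=s_2-s_1\ge 1$.

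The restricted hypothesis enters precisely here. Since every intra-slice edge of the initial template also appears (between primed copies) in the step template, the structure induced on slices $1,2,3,\dots$ is identical and that on slice $0$ is a sub-structure of it; consequently any directed path of the unfolding lying within slices $[a,b]$ shifts up to a valid directed path within slices $[a{+}c,\,b{+}c]$ for every $c\ge 0$ (the only edges a shift could destroy are intra-slice slice-$0$ edges, and those are present at every slice $\ge 1$). Applying this to the initial segment of $P$ from $S$ to its first slice-$s_1$ vertex, shifted up by $d$, produces a directed path from the slice-$(t_S{+}d)$ copy $S^+$ of $S$ to the first slice-$s_2$ vertex of $P$ (which has the same variable name, since $\pi(s_1)=\pi(s_2)$); likewise for $Q$. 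Those target vertices lie on $P$ and $Q$, hence are ancestors of $X^t$ and $Y^t$, so $S^+$ is a common ancestor of $X^t$ and $Y^t$ at slice $t_S+d$, with $t_S<t_S+d\le s_2\le t$ --- contradicting maximality of $t_S$. Hence $t_S\ge t-|\mathbf{V}|^2$, and unwinding the trek characterization produces the desired short collision-free path.

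The conceptual content is light; I expect the obstacles to be entirely in the bookkeeping. The delicate points are: stating the ``shift'' lemma correctly and observing that one may only shift \emph{up} --- shifting down can fail because slice $0$ may genuinely be missing intra-slice edges, which is exactly what ``restricted'' rules out as an obstruction; handling the boundary case in which the shifted segment touches slice $0$ (so $t_S=0$ must be allowed); and carefully carrying out the pruning in the trek characterization so that the recovered object is a simple, genuinely collision-free path rather than merely a walk.
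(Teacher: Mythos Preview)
Your proposal is correct and follows essentially the same approach as the paper: both arguments apply the pigeonhole principle to the $|\mathbf{V}|^2$ possible entry/exit pairs at each time slice, then exploit the restricted hypothesis to shift a subpath upward and obtain a contradiction with minimality. The only cosmetic difference is that the paper works directly with the trek and shifts its \emph{deep} segment (the portion below the repeated slice), whereas you decompose into two directed legs $P,Q$ via the apex and shift their \emph{shallow} prefixes; your version is more explicit about the shift lemma and the trek/common-ancestor correspondence, but the underlying mechanism is identical.
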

\begin{proof}
Suppose, for the sake of contradiction, a path with the lowest ``traceback'' goes back more than $|\mathbf{V}|^2$ time slices. Then, by the pigeonhole principle, there exist $i < j$ such that the path enters and exits $\mathbf{V}^{t-i}$ and $\mathbf{V}^{t-j}$ at the same pair of variables. We can thus replace the path from $\mathbf{V}^{t-i}$ with the path from $\mathbf{V}^{t-j}$ and get a collision-free path with a smaller traceback: a contradiction, as desired.
\end{proof}

Thus, in the case of restricted DBNs, the trace is ultimately constant after at most $|\mathbf{V}|^2$ time steps, and the entire transition system can be computed in time polynomial in $|\mathbf{V}|, |A|$. This reduced complexity allows us to use standard techniques to show:

\begin{theorem}
\label{lemma-restricted-poly}
The LTL and NBA model-checking problems for restricted DBNs with structural-independence propositions can be solved in polynomial time.
\end{theorem}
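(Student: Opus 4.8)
The plan is to leverage the two lemmas immediately preceding the statement to reduce the problem to model checking a \emph{lasso-shaped} (ultimately periodic, indeed ultimately constant) word of polynomial size, and then to invoke standard polynomial-time algorithms for evaluating LTL formulae and NBAs on such words.

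First I would make precise the claim, already asserted in the text, that the trace $\tau$ of a restricted DBN-template on variables $\mathbf{V}$ is ultimately constant after a polynomial number $N = O(|\mathbf{V}|^2)$ of steps. By the preceding lemma, in a restricted DBN-template every collision-free path witnessing an edge $(U_{ij},W_i),(U_{ij},W_j)$ in the representative BN-template at time $t$ may be taken to go back at most $|\mathbf{V}|^2$ time slices. For all sufficiently large $t$ these last $|\mathbf{V}|^2$ slices lie entirely within the step part of the unfolding, whose structure is translation-invariant; hence the set of $U_{ij}$-parents present, and therefore the whole representative, coincides for all such $t$. Writing $q_\infty$ for this stable representative and $a = L(q_\infty) \in 2^A$, we get $\tau = \tau(0)\cdots\tau(N-1)\,a^\omega$. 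Equivalently, the reachable part of the transition system $\mathcal{S}$ from the previous subsection is just a path $q_0 \to q_1 \to \cdots \to q_{N-1} \to q_\infty$ followed by a self-loop at $q_\infty$.

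Second I would compute this lasso explicitly: starting from $q_0$ (the initial BN-template), iterate the successor function of $\mathcal{S}$, which by the construction there runs in time polynomial in $|\mathbf{V}|$ per step, obtaining $q_0,\dots,q_{N-1},q_\infty$ in total polynomial time; for each, compute its label by running d-separation queries for the propositions in $A$, polynomial in $|\mathbf{V}|$ and $|A|$ by Thm.~\ref{thm:BN-equivalence}. This yields $\tau(0),\dots,\tau(N-1),a$ explicitly, an object of polynomial size. Then I would perform the model check on this word. For an LTL formula $\varphi$, since the loop has length one, every subformula of $\varphi$ is either uniformly true or uniformly false on the constant suffix $a^\omega$ (on a constant word, $\LTLnext$ is the identity and $\psi_1 \LTLuntil \psi_2$ reduces to $\psi_2$), so these values are read off bottom-up; then propagate truth values backwards through $\tau(N-1),\dots,\tau(0)$ by a bottom-up dynamic program over subformulas and positions, in time polynomial in $N$ and $|\varphi|$. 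For an NBA $\mathcal{A}=(Q,2^A,\Delta,Q_0,F)$, an accepting run on $\tau(0)\cdots\tau(N-1)\,a^\omega$ exists iff some state $q\in Q$ is reachable from $Q_0$ by reading $\tau(0)\cdots\tau(N-1)$ along $\Delta$ and, in the directed graph on $Q$ whose edges are the $a$-transitions of $\Delta$, some cycle containing an accepting state is reachable from $q$; this is a reachability/SCC computation on $\mathcal{A}$ restricted to the single letter $a$, solvable in time polynomial in $|Q|$ and $N$. Both checks are polynomial, which proves the theorem.

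The main obstacle — a mild one — is the first step: spelling out rigorously that the representative at time $t$ genuinely stabilizes, i.e. that looking back past the early slices never reveals a new $U_{ij}$-connection, and reconciling the precise bound on $N$. This is exactly the content of the preceding lemma combined with the translation-invariance of the step template, but it must be argued carefully; once the polynomial-size lasso is in hand, the remaining steps are entirely standard $\omega$-word model checking.
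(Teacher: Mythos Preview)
Your proposal is correct and follows essentially the same approach as the paper: compute the polynomial-length ultimately-constant trace via the transition system, then for LTL use exactly the bottom-up dynamic program over positions and subformulae that the paper uses (with the same base case $\LTLnext\varphi'\equiv\varphi'$ and $\varphi_1\LTLuntil\varphi_2\equiv\varphi_2$ on the constant tail). For the NBA case the paper phrases the check as ``build a single-word automaton for $\tau$, intersect with $\mathcal{A}$, test non-emptiness,'' whereas you unfold this into explicit prefix-reachability plus accepting-cycle detection in the $a$-restricted transition graph; these are the same computation, so there is no substantive difference.
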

\begin{proof}
The case of NBA model checking is immediate. We can construct an automaton whose language is precisely the trace $\tau$, intersect it with the given NBA, and check the result for non-emptiness, all in polynomial time \cite[Chapter 4.3]{baierkatoen}.

To check whether an ultimately constant trace satisfies an LTL formula $\varphi$, we adopt a dynamic programming approach, memoizing for $0 \le t \le |\mathbf{V}|^2+1 = T$ whether a suffix $\tau[t:\infty]$ satisfies a subformula $\varphi'$ of $\varphi$. We populate entries from larger to smaller $t$, and simpler to more complex $\varphi'$. Atomic propositions and Boolean connectives are handled in the obvious way. The suffix from time $t = T$ satisfies $\LTLnext \varphi'$ if and only if it satisfies $\varphi'$; it satisfies $\varphi_1 \LTLuntil \varphi_2$ if and only if it satisfies $\varphi_2$. For smaller $t$, the suffix from time $t$ satisfies $\LTLnext \varphi'$ if and only if the suffix from time $t+1$ satisfies $\varphi$; it satisfies $\varphi_1 \LTLuntil \varphi_2$ if and only if it either satisfies $\varphi_2$, or it satisfies $\varphi_1$ and the suffix from $t+1$ satisfies $\varphi_1 \LTLuntil \varphi_2$. Having populated the table, we check whether the suffix from $t=0$ satisfies the given $\varphi$.
\end{proof}

\section{Stochastic Conditional Independence}
\label{section:skolem}

In this section, we  establish the number-theoretic hardness of reasoning about stochastic CIs when concrete conditional probability distributions are given. Specifically, we shall show that deciding whether formulae of the form $\LTLeventually (X \probind Y)$ hold is at least as hard as the Skolem problem for rational linear recurrence sequences (LRS).

A rational LRS of order $k$ is a sequence $(u_n)_{n=0}^\infty$ of rational numbers satisfying the recurrence relation $u_{n+k} = a_{k-1}u_{n+k-1} + \cdots + a_0 u_n$, where $a_{k-1}, \ldots, a_0$ are rational numbers with $a_0 \ne 0$. It is given by the coefficients $a_0, \ldots, a_{k-1}$, and the initial terms $u_0, \ldots, u_{k-1}$. The Skolem problem takes as input an LRS, where the recurrence relation and initial terms are respectively encoded as a companion matrix $A \in \mathbb{Q}^{k\times k}$ and a vector $u \in \mathbb{Q}^k$, and asks whether there exists an $n$ such that $u_n = 0$, i.e., $A^n u$ contains a $0$-entry. 

The Skolem problem has been open for nearly a century \cite{everestmonograph, tao2008}. It is open even if we restrict the LRS to have order five \cite{ouaknineworrellrpsurvey}. It is known to be decidable at orders four and below \cite{Tijdeman1984, vereshchagin}.

\begin{lemma}
\label{lem-skolem-hardness}
Consider a rational LRS of order $k$, given by its companion matrix $A \in \mathbb{Q}^{k \times k}$, and vector $u \in \mathbb{Q}^k$ of initial values. We can compute a DBN with $\lceil \log k \rceil + 2$ binary variables $X, Z_1, \ldots, Z_\ell, Y$ where $\ell=\lceil \log k \rceil$ and rational conditional probabilities, such that $\LTLeventually(X \probind Y)$ holds if and only if the LRS has a zero term.
\end{lemma}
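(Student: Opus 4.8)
The plan is to route everything through the equivalence of DBNs and Markov chains (Lem.~\ref{lemma:unfolding}): I will build a Markov chain on $2^{\lceil\log k\rceil+2}$ states whose trace encodes the orbit of the LRS, and then read off a DBN with $\lceil\log k\rceil+2$ binary variables. For binary $X,Y$, the proposition $(X\probind Y)$ is equivalent to $\mathrm{Cov}(X^t,Y^t)=\Pr[X^t{=}1,Y^t{=}1]-\Pr[X^t{=}1]\Pr[Y^t{=}1]=0$, so it suffices to engineer the chain so that $\mathrm{Cov}(X^t,Y^t)=-\tfrac{\varepsilon}{2c^t}\,u_t$ for all $t$, with fixed nonzero rationals $\varepsilon,c$; then $\LTLeventually(X\probind Y)$ holds exactly when some $u_t$ vanishes. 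The fundamental difficulty is that $\mathrm{Cov}$ is \emph{quadratic} in the state distribution $\mu_t=M^t\mu_0$, whereas Cayley--Hamilton only turns \emph{linear} functionals of $\mu_t$ into linear recurrence sequences; the whole construction is designed to linearise the covariance.

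The first ingredient makes the $Y$-marginal constant. I take $Y$ to be a ``frozen'' bit: the step template sets $Y'=Y$ deterministically, so the transition matrix has the block form $M=\left(\begin{smallmatrix}P&R\\R&P\end{smallmatrix}\right)$, where the two blocks are $\{Y{=}0\}$ and $\{Y{=}1\}$ (each of size $2^{\lceil\log k\rceil+1}$), and $P,R$ are arranged to have identical column sums ($=\tfrac12$), so $C:=P-R$ has zero column sums while $P+R$ is column-stochastic. Writing $\mu_t=(a_t;b_t)$ and $\delta_0:=a_0-b_0$, one gets $a_t-b_t=C^t\delta_0$ and $\Pr[Y^t{=}1]=\tfrac12(1-\mathbf 1^\top C^t\delta_0)$; choosing $\delta_0$ with $\mathbf 1^\top\delta_0=0$ makes this $\tfrac12$ for every $t$, since $\mathbf 1^\top C=0$. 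With $X$ a coordinate bit, $\{X{=}1\}$ determines a set $\widehat x$ of the $2^{\lceil\log k\rceil+1}$ non-$Y$ states, and a short computation gives $\mathrm{Cov}(X^t,Y^t)=-\tfrac12\langle\mathbf 1_{\widehat x},C^t\delta_0\rangle$ — now \emph{linear} in $\mu_t$.

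The second ingredient realises the companion matrix inside $C$. I pass to $A':=A/c$ for a large rational $c$ so that $A'$ has tiny entries and $u'_n:=u_n/c^n$ is an LRS with companion matrix $A'$ satisfying $u'_n=0\Leftrightarrow u_n=0$; rescaling the initial vector accordingly to $\mathbf v=(u_0,u_1/c,\dots,u_{k-1}/c^{k-1})$ gives $(A'^{\,t}\mathbf v)_1=u_t/c^t$. Inside one $Y$-block I use states $\{0,1,\dots,k\}$ for the LRS and the remaining $2^{\lceil\log k\rceil+1}-(k{+}1)\ge0$ states as ``padding'' (self-loops carrying no signed mass). On $\{1,\dots,k\}$ I let $C$ act as $A'$, and I add a single compensator row at state $0$ holding, in each column $j$, the negative of that column's sum; then $C$ has zero column sums, column $0$ of $C$ is null, and no signed mass ever leaves $\{0,\dots,k\}$, so $(C^t\delta_0)|_{\{1,\dots,k\}}=A'^{\,t}(\delta_0|_{\{1,\dots,k\}})$. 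Setting $\delta_0|_{\{1,\dots,k\}}=\varepsilon\mathbf v$ and $\delta_0(0)=-\varepsilon\sum_i(\mathbf v)_i$ makes $\delta_0$ zero-sum, and after shrinking $\varepsilon$ I can write $\delta_0=a_0-b_0$ with genuine distributions $a_0,b_0$ and $C=P-R$ with $2P,2R$ column-stochastic, routing all slack to padding. Finally I pick the binary encoding of the non-$Y$ states so that state $1$ is the \emph{only} state of $\{0,\dots,k\}$ with $X$-bit $1$ (here I use $2^{\lceil\log k\rceil}\ge k$: exactly $2^{\lceil\log k\rceil}-1$ padding states — there are at least that many — must take $X$-bit $1$ to keep $X$ a genuine coordinate). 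Then $\langle\mathbf 1_{\widehat x},C^t\delta_0\rangle=(C^t\delta_0)(1)=\varepsilon\,u_t/c^t$, whence $\mathrm{Cov}(X^t,Y^t)=-\tfrac{\varepsilon}{2c^t}u_t$.

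Putting the pieces together, $\LTLeventually(X\probind Y)$ holds iff $\mathrm{Cov}(X^t,Y^t)=0$ for some $t$ iff $u_t=0$ for some $t$. All transition and initial probabilities are rational and explicitly computable from $A,u$ (choosing $c,\varepsilon$ as concrete rationals), and turning this Markov chain together with its initial distribution into a DBN with $\lceil\log k\rceil+2$ binary variables is exactly the ``reverse'' construction of Lem.~\ref{lemma:unfolding}, applied to both the step matrix and the initial distribution. I expect the main obstacle to be the combination of (i) collapsing the quadratic covariance to the linear functional $-\tfrac12\langle\mathbf 1_{\widehat x},C^t\delta_0\rangle$ via the frozen-$Y$ symmetry, and (ii) squeezing the signed companion matrix, its compensator state, and the single coordinate bit $X$ into exactly $2^{\lceil\log k\rceil+2}$ states, which forces the careful padding-and-encoding argument; the scaling/positivity bookkeeping that makes $C$ a difference of substochastic matrices is routine by comparison.
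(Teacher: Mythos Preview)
Your strategy differs from the paper's: instead of invoking the embedding of Aghamov et al.\ (their Cor.~1) to place the LRS inside an ergodic Markov chain and then letting a fresh uniform $Y$ at each step either advance that chain or fast-forward it to stationarity, you try to absorb the (possibly negative) companion matrix directly into the signed difference $C=P-R$ of two substochastic blocks and read the LRS off the linear functional $\langle\mathbf 1_{\widehat x},C^t\delta_0\rangle$. That is a more elementary route, and the covariance identity $\mathrm{Cov}(X^t,Y^t)=-\tfrac12\langle\mathbf 1_{\widehat x},C^t\delta_0\rangle$ is indeed correct \emph{once} the block form $M=\bigl(\begin{smallmatrix}P&R\\R&P\end{smallmatrix}\bigr)$ is in place.

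The gap is that your stated mechanism does not produce that block form. You write that ``the step template sets $Y'=Y$ deterministically'', but $Y'=Y$ makes the Markov chain block-\emph{diagonal} in the $Y$-partition, i.e.\ forces $R=0$. Then $C=P$ is a nonnegative column-stochastic matrix, not a zero-column-sum matrix, and the step ``on $\{1,\dots,k\}$ let $C$ act as $A'$'' is impossible whenever the companion matrix has a negative entry (which you cannot exclude). Conversely, the block form you actually need---nonzero $R$ with $P$ and $R$ each having column sums $\tfrac12$---means $Y$ flips with probability $\tfrac12$ and the non-$Y$ transitions depend on whether $Y$ flipped; this is expressible in a DBN, but it is the opposite of ``frozen''. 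So the linearisation is right, the embedding idea is right, but the bridge between them (how $Y$ behaves) is broken as written. A secondary slip: $A/c$ is not the companion matrix of $(u_n/c^n)_n$, and with $A':=A/c$ the identity $(A'^{\,t}\mathbf w)_1=u_t/c^t$ holds for the \emph{unrescaled} initial vector $\mathbf w=(u_0,\dots,u_{k-1})$, not for your $\mathbf v=(u_0,u_1/c,\dots,u_{k-1}/c^{k-1})$.
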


The reduction uses \cite[Cor.~1]{Aghamov2025} to ``embed'' the given LRS into a Markov chain $(M, v)$, and then Lem.~\ref{lemma:unfolding} to convert the Markov chain into a DBN. We remark that as a corollary, our construction can also be used to reduce the closely related Positivity problem for LRS (see, e.g., \cite{soda2014positivity} for arguments of number-theoretic hardness) to the problem of deciding whether $Y$ always ``positively influences'' $X$.
\begin{proof}
We construct a DBN such that at the $n$-th time step:
\begin{itemize}
\item The event $Y = 1$ occurs unconditionally with probability $1/2$.
\item The difference $\Pr[X = 1 | Y =1 ] - \Pr[X=1 | Y = 0]$ is $2^{-n} \rho^n \eta$ times the $n$-th term of the LRS, where $\eta, \rho$ are positive rational constants.
\end{itemize}
At a high level, the construction proceeds as follows:
\begin{enumerate}
\item We use \cite[Cor.~1]{Aghamov2025} to ``embed'' the given instance $(A, u)$ of order $k$ into an ergodic Markov chain $(M, v)$ of order $k+1$.
\item We encode the states of the Markov chain with binary variables, or ``bits'' $X, Z_0, \ldots, Z_\ell$, where $X$ indicates whether the system is in the first state, and all other states are indicated by the usual binary encoding introduced in Lem.~\ref{lemma:unfolding}. This ensures that using even restricted DBNs suffices for the reduction.
\item In the DBN, the current values of $X, Z_0, \ldots, Z_\ell$ depend on not only on the previous values, but also on the current value of $Y$. If $Y = 1$, the distribution of current values is obtained from the previous values as per the construction in Lem.~\ref{lemma:unfolding}. Otherwise, the distribution is ``fast-forwarded'' to the stationary distribution $s$ of $M$.
\end{enumerate}

Let $s \in \mathbb{Q}^{k+1}$ be the uniform distribution, and $S$ be the square matrix whose columns are all $s$. By \cite[Cor.~1]{Aghamov2025}, we can compute an ergodic Markov chain $M$ and an initial distribution $v$ (both with all entries rational) such that for all $n$, 
\begin{equation}
M^n v = s + \eta \rho^n \begin{bmatrix} I \\ -\mathbf{1}_k^\top \end{bmatrix}A^n u,
\label{embedding}
\end{equation}
where $\eta, \rho$ are positive rational constants, and $\mathbf{1}_k$ denotes the vector with all entries equal to $1$.

We label the states of the Markov chain as $\alpha, 0, 1, \ldots, k-1$, and encode them with bits $X, Z_0, \ldots, Z_\ell$. The encoding of $\alpha$ has $X = 1$ and all other bits $0$, the encoding of any other state $i$ has $X = 0$ and other bits set according to the binary representation of $i$. Clearly, this uses $\lceil \log k \rceil + 1$ bits.

We shall also index the rows and columns of $M$ by $\alpha, 0, 1, \ldots, k-1$, such that $\alpha$ corresponds to the topmost, and leftmost.

To construct the DBN including the additional bit $c$, we replicate the construction of Lem.~\ref{lemma:unfolding}. The initial Bayesian network is set up so that valid state encodings with $Y = 0$ each get half the probability prescribed by $s$, and valid state encodings with $Y = 1$ each get half the probability prescribed by $v$.

For the step-template, we have that the current value of $Y$ is unconditionally assigned uniformly at random. If $Y = 1$, the conditional distribution of bits $X, Z_0, \ldots, Z_\ell$ is the same as that prescribed by $M$, i.e., if the previous values encoded a valid state $i$, then the conditional probability of valid encoding $j$ is the $(j, i)$-th entry of $M$; if the previous value was an invalid encoding $i$, then the current value is deterministically $i$. Similarly, if $Y = 0$: if the previous values encoded a valid state $i$, then all valid encodings $j$ are assigned probability $1/(k+1)$, if the previous encoding $i$ was invalid, then the current encoding is deterministically $i$.

We note that only the valid encodings are reachable, and $\frac{1}{2}\begin{bmatrix}M & M \\ S & S\end{bmatrix}$ is an equivalent Markov chain for (the reachable configurations of) the DBN, and the initial distribution is $\frac{1}{2}\begin{bmatrix}v \\ s\end{bmatrix}$.

Intuitively, at step $n$, the probability that $X, Z_1, \ldots, Z_\ell$ encode state $i$ is the probability that there was never a fast-forward (which is $2^{-n-1}$) times what the probability would be according to the Markov chain (which is $e_i^\top M^n v$), plus the probability there was a fast-forward times the stationary probability (which is $e_i^\top s$), where $e_i$ is the vector whose entry at index $i$ is $1$ and other entries are $0$.

Formally, at step $n$,

$
\Pr[X=1] = \frac{1}{2^{n+1}} e_\alpha^\top M^n v + \frac{2^{n+1} - 1}{2^{n+1}} \cdot \frac{1}{k+1}.
$

We can also check this via the equivalent Markov chain of the DBN. We can show via a simple induction, and using the facts that $MS = SM = SS = S$, that 
$$
\begin{bmatrix}M & M \\ S & S\end{bmatrix}^n = 
\begin{bmatrix}
M^n + (2^{n-1} - 1)S & M^n + (2^{n-1} - 1)S \\
2^{n-1}S & 2^{n-1}S
\end{bmatrix}
$$
Using the fact that $Sv = s$, we have that at step $n$, the distribution is $\frac{1}{2^{n+1}} \begin{bmatrix}M^n v + (2^n-1)s \\ 2^n s\end{bmatrix}$.

Observe that
$
\Pr[X=1 \mid Y=1] = \frac{1}{2^{n}} e_\alpha^\top M^n v + \frac{2^{n} - 1}{2^{n}} \cdot \frac{1}{k+1},
$
and that $\Pr[X=1 \mid Y=0] = \frac{1}{k+1}$, and their difference is $\frac{1}{2^n} \left (e_\alpha^\top M^n v - \frac{1}{k+1}\right)$. Recall that by the embedding \ref{embedding} of the LRS by \cite[Cor.~1]{Aghamov2025}, this can be rewritten as $2^{-n} \eta \rho^n (e^\top A^n u)$, i.e., a scaled version of the LRS. We have thus shown that at time $n$,
$
\Pr[X=1 \mid Y=1] - \Pr[X=1  \mid Y=0] = 2^{-n} \eta \rho^n u(n),
$
where $u(n)$ is the $n$-th term of the LRS, and $\eta, \rho$ are rational constants. The reductions of Skolem to eventual independence of $X$ and $Y$ and Positivity to global causation of $X$ by $Y$ is complete.
\end{proof}

\section{Discussion: Faithfulness in DBNs}
\label{section:discussion}

Prop.~\ref{sound&complete} demonstrates an analog of Thm.~\ref{thm:BN-equivalence} for DBNs, albeit in one direction. However, in future work, we aim to formally prove that the concept of structural independence is faithful to stochastic independence in DBNs, establishing a complete analog of Thm.~\ref{thm:BN-equivalence} for DBNs. The distinction from the known result is that, when transitioning to DBNs, we impose constraints on the parameters by identifying the distributions of the same variables across different time slices. This reduces the dimensionality of the parameter space, leading to a strictly smaller family of admissible Bayesian networks at any given time t.

We say that the parameters $\langle \mathcal{P}^0, \mathcal{P}^\mathsf{step} \rangle$ are \emph{$t$-unfaithful} if
$(\mathbf{X} \probind \mathbf{Y} \mid \mathbf{Z}) \quad \text{holds at time } t \quad \text{but} \quad (\mathbf{X} \ind \mathbf{Y} \mid \mathbf{Z}) \quad \text{does not}$.
They are called \emph{unfaithful} if this occurs for some $t$. In other words, unfaithful parameters are those for which the structural and stochastic conditional independencies diverge. We aim to prove that only a measure-$0$ set of parameters is unfaithful.

Recall that by definition, if $(\mathbf{X}^t \probind \mathbf{Y}^t \mid \mathbf{Z}^t)$, then for every $\mathbf{x}, \mathbf{y}, \mathbf{z}$, we must have that the expression
\begin{align*}
&\Pr[(\mathbf{X}^t, \mathbf{Y}^t, \mathbf{Z}^t) = (\mathbf{x}, \mathbf{y}, \mathbf{z})]\cdot \Pr[\mathbf{Z}^t = \mathbf{z}]  \\ - &\Pr[(\mathbf{X}^t, \mathbf{Z}^t) = (\mathbf{x}, \mathbf{z})] \cdot \Pr[(\mathbf{Y}^t, \mathbf{Z}^t) = (\mathbf{y}, \mathbf{z})]
\end{align*}
is equal to $0$. We observe that in the DBN setting, we can use Lem.~\ref{lemma:unfolding} to argue that the above expression is a (degree $O(t)$) polynomial in the parameters $\langle\mathcal{P}^0, \mathcal{P}^\mathsf{step}\rangle$. In particular, if the polynomial is not identically $0$, then for all but a measure-$0$ set of parameters, it returns a nonzero value. Our strategy is thus to prove that if structural dependence holds, the corresponding polynomial cannot be identically $0$. Using that zero-sets of polynomials have measure $0$ and are closed under countable unions, we would deduce that unfaithful parameters form a measure-$0$ set.

We remark that the proof of Thm.~\ref{thm:BN-equivalence} in \cite[Section~6.4]{Meek} relies on a \emph{local dependence} condition: if $(X, Y)$ is an edge, then $X$ and $Y$ must be dependent. In Bayesian networks, CPDs can be chosen so that variables along a $d$-path are locally dependent, while all others remain independent of any other variables. This is not possible in dynamic Bayesian networks, where temporal and structural constraints prevent such isolation. This limitation is a key obstacle to extending the argument to the DBN setting. A potential approach is to consider a CPD where a variable takes value $1$ with higher probability whenever the majority of its parents are $1$. While we focus on binary variables here, establishing the result in this setting would yield the general case as a straightforward corollary.  Another promising direction comes from algebraic statistics \cite{alstat2018}, which applies tools from algebraic geometry and combinatorics to study statistical models, especially those involving discrete data. In Bayesian networks, it encodes conditional independencies as polynomial equations and analyzes the resulting algebraic varieties to understand structural and probabilistic properties. Another line of attack could also possibly involve observing that the sequence of polynomials characterizing conditional (in)dependence at time $t$ forms a linear recurrence over the field of multivariate rational functions, and judiciously appealing to the Skolem-Mahler-Lech theorem (the set of zeroes of a linear recurrence over a field of characteristic $0$ is the union of a finite set and finitely many effective arithmetic progressions).

\section{Conclusion}
We introduced LTL-based and NBA-based specification formalisms to express temporal properties of CIs in DBNs.
These formalisms can express desirable system properties such as non-interference in security applications
and open the possibility to verify systems against all kinds of desirable specifications regarding the temporal evolution of CIs.

We restricted here to CI propositions that state CIs between variables at the same time slice.
Our techniques, however, offer the possibility to 
  introduce CI propositions talking about variables at different time slices. A syntax for such propositions  could, e.g., be $(\mathbf{X}^{+2} \ind \mathbf{Y} \mid \mathbf{Z}^{+1})$,
 which holds at time point $t$ in a DBN-template if $(\mathbf{X}^{t+2} \ind \mathbf{Y}^t \mid \mathbf{Z}^{t+1})$.
 If the entries expressing the time shifts are bounded by some $k$ given in unary, out  model-checking algorithm for DBN-templates can be adapted without increasing the asymptotic complexity:
We can adapt the construction of the deterministic transition system encoding the trace of the DBN-template by letting the states
consist of BN-representatives that unfold the step template for $k$ steps before adding  the ``tunnelling through''-layer of variables encoding the existence of collision-free d-paths between variables.

Regarding stochastic CIs in DBNs, our Skolem-hardness 
result is sobering regarding the potential of verifying systems against temporal specifications with respect to stochastic CI statements---which might come as a surprise. 
The key  to obtain this hardness result was establishing the intricate connection between LRSs and DBNs.

\section*{Acknowledgments}
The authors were supported  by the DFG through the DFG grant 389792660 as part of TRR 248 (Foundations of Perspicuous Software Systems, see https://www.perspicuous-computing.science) and the Cluster of
Excellence EXC 2050/1 (CeTI, project ID 390696704, as
part of Germany’s Excellence Strategy) and by the BMBF (Federal Ministry of Education and Research) in DAAD project
57616814 (SECAI, School of Embedded and Composite AI)
as part of the program Konrad Zuse Schools of Excellence
in Artificial Intelligence. Jo\"el Ouaknine is also aﬃliated with Keble College, Oxford, as emmy.network Fellow.

\bibliography{references-final}

\end{document}